\def\eqref#1{equation~\ref{#1}}
\def\1{\bm{1}}
\DeclareMathAlphabet{\mathsfit}{\encodingdefault}{\sfdefault}{m}{sl}
\SetMathAlphabet{\mathsfit}{bold}{\encodingdefault}{\sfdefault}{bx}{n}
\newcommand{\pidraft}{\pi_{\mathrm{draft}}}
\newcommand{\pitarget}{\pi_{\mathrm{target}}}
\newcommand{\Cdt}{C_{\mathtt{seq}}}
\newcommand{\Cdtstep}{C_{\mathtt{block}}}
\newcommand{\kl}{D_{\text{KL}}}
\newcommand{\rprm}{r_{\mathtt{PRM}}}
\newcommand{\rprmtarget}{r_{\mathtt{PRM}}^{\text{target}}}
\newcommand{\subsample}{\textsc{\texttt{SubSample}}\xspace}
\newcommand{\currtrace}{\textsf{tr}}
\newcommand{\Ydraft}{\mathbf{Y}_{\text{draft}}}
\newcommand{\Yaccept}{\mathbf{Y}_{\text{accept}}}
\newcommand{\Ytarget}{\mathbf{Y}_{\text{target}}}
\newcommand{\pibase}{\pi_{\mathrm{base}}}
\newcommand{\pigen}{\pi_{\mathrm{gen}}}
\newcommand{\pitwostage}{\pi_{\texttt{two-stage},\beta_1,\beta}}
\newcommand{\pipartapx}[1]{\pi_{\texttt{SMC},#1}}
\newcommand{\barpipartapx}[1]{\overline{\pi}_{\texttt{SMC},#1}}
\newcommand{\Poi}{\textsf{Poi}}
\newcommand{\bbE}{\mathbb{E}}
\newcommand{\prm}{\texttt{PRM}}
\newcommand\redsout{\bgroup\markoverwith{{\rule[0.5ex]{3pt}{0.75pt}}}\ULon}
\newlist{myenum}{enumerate}{1} 
\setlist[myenum]{label=\arabic*., leftmargin=*, itemsep=0em}
\newcommand{\LineLabel}[1]{\edef\@currentlabel{\theALC@line}\label{#1}}
\theoremstyle{plain}
\newtheorem{theorem}{Theorem}
\newtheorem{lemma}{Lemma}
\newtheorem{corollary}{Corollary}
\theoremstyle{definition}
\newtheorem{definition}{Definition}
\newtheorem{assumption}{Assumption}
\newtheorem{observation}{Observation}
\theoremstyle{remark}
\newtheorem{remark}{Remark}
\newcommand{\specalign}{{\texttt{SPECS}}\xspace}
\def\withnotes{1}
\newcommand{\nived}[1]{{\color{red} \textbf{Nived:} #1}}
\newcommand{\ahmad}[1]{{\color{teal} \textbf{Ahmad:} #1}}
\newcommand{\mert}[1]{{\color{brown} \textbf{Mert:} #1}}
\newcommand{\rishabh}[1]{{\color{red} \textbf{Rishabh:} #1}}
\newcommand{\lily}[1]{{\color{brown} \textbf{Lily:} #1}}
\newcommand{\ziteng}[1]{{\color{orange} \textbf{Ziteng:} #1}}
\newcommand{\ion}[1]{{\color{blue} \textbf{Ion:} #1}}
\newcommand{\kannan}[1]{{\color{blue} \textbf{Kannan:} #1}}
\newcommand{\nived}[1]{{}}
\newcommand{\ahmad}[1]{{}}
\newcommand{\mert}[1]{{}}
\newcommand{\rishabh}[1]{{}}
\newcommand{\lily}[1]{}
\newcommand{\ziteng}[1]{{}}
\newcommand{\ion}[1]{{}}
\newcommand{\kannan}[1]{{}}
\definecolor{darkgreen}{rgb}{0.0, 0.5, 0.0}
\definecolor{darkred}{rgb}{0.6, 0.0, 0.0}
\newcommand{\Smodel}{\color{gray}BS (\texttt{\textcolor{darkgreen}{S})}}
\newcommand{\Bmodel}{BS (\texttt{\textcolor{darkred}{B})}}
\newcommand{\specaligntable}{\texttt{$\specalign$}}
\newcommand{\graycell}[1]{\cellcolor{teal!15}#1}
\newcommand{\commentsymbol}{//}
\newcommand{\LineComment}[1]{\hfill\commentsymbol{} #1}
\pretocmd{\appendix}{%
  \pretocmd{\section}{\addcontentsline{apx}{section}{\thesection\quad #1}}{}{}%
}{}{}
\icmltitlerunning{$\specalign$: Faster Test-Time Scaling}
\begin{document}

\twocolumn[
  \icmltitle{$\specalign$: Faster Test-Time Scaling through \\ Speculative Drafts and Dynamic Switching}

\icmlsetsymbol{equal}{*}

\begin{icmlauthorlist}
\icmlauthor{Mert Cemri}{equal,yyy}
\icmlauthor{Nived Rajaraman}{equal,yyy}
\icmlauthor{Rishabh Tiwari}{equal,yyy}
\icmlauthor{Xiaoxuan Liu}{yyy}
\icmlauthor{Kurt Keutzer}{yyy}\\
\icmlauthor{Ion Stoica}{yyy}
\icmlauthor{Kannan Ramchandran}{yyy}
\icmlauthor{Ahmad Beirami}{sch}
\icmlauthor{Ziteng Sun}{comp}

\end{icmlauthorlist}

\icmlaffiliation{yyy}{UC Berkeley}
\icmlaffiliation{comp}{Google Research}
\icmlaffiliation{sch}{Google DeepMind}
\icmlcorrespondingauthor{\\Mert Cemri}{cemri@berkeley.edu}
\icmlcorrespondingauthor{\\Nived Rajaraman}{nived.rajaraman@berkeley.edu}
\icmlcorrespondingauthor{\\Rishabh Tiwari}{rishabhtiwari@berkeley.edu}
\icmlcorrespondingauthor{\\Ahmad Beirami}{ahmad.beirami@gmail.com}
\icmlcorrespondingauthor{\\Ziteng Sun}{zitengsun@google.com}
\icmlkeywords{Machine Learning, ICML}
  \vskip 0.3in
] 


\printAffiliationsAndNotice{\icmlEqualContribution}

\begin{abstract}
Scaling test-time compute has driven the recent advances in the reasoning capabilities of large language models (LLMs). 
However, increased compute often comes at the expense of higher user-facing latency, directly impacting user experience. Current test-time scaling methods primarily optimize for accuracy based on total compute resources (FLOPs), often overlooking latency constraints. To address this gap, we propose \specalign, a latency-aware test-time scaling method. \specalign builds upon beam search, which generates multiple reasoning traces for each step with a reasoning model, and selects one to continue from based on the scores from a dedicated reward model. Inspired by speculative decoding, \specalign~uses a smaller, faster model to generate candidate traces efficiently, and evaluates these candidates with both the reasoning model and the reward model. We design novel strategies to select candidate drafts using these model evaluations, including reward-guided soft verification, and a dynamic switching mechanism to defer to the larger model on harder steps. Empirical results on MATH500, AMC23, OlympiadBench and GPQA datasets show that \specalign~matches or surpasses the accuracy of beam search while reducing latency by up to $\sim$18\%. Our theoretical analysis shows that our algorithm converges to the solution of a KL-regularized reinforcement learning objective as the beam width grows \footnote{The code is available at \hyperlink{https://github.com/mert-cemri/SPECS}{https://github.com/mert-cemri/SPECS}}. 
\end{abstract}

\section{Introduction}
Modern LLMs excel at multi-step reasoning, and scaling test-time compute has played a major role in achieving these reasoning capabilities by letting these models tackle harder problems with  extra “thinking” compute resources~\citep{cobbe2021trainingverifierssolvemath,wei2022chain,beirami2024theoretical,brown2024largelanguagemonkeysscaling,beeching2024scalingtesttimecompute,qiu2024treebon,o1,deepseekai2025deepseekr1incentivizing}. 

To date, test-time scaling methods have primarily optimized performance based on total compute, demonstrating improved downstream task performance with increased FLOPs or generated tokens~\citep{o1,snell2025scaling}. However, user experience often depends more directly on serving \textit{latency}, especially in low-throughput scenarios like personalized interactions or applications serving few users \citep{patel2024splitwise, agrawal2024taming}. Autoregressive generation from LLMs often operates in the regime where the latency is limited by memory loading rather than total FLOPs~\citep{tiwari2025quantspec,yuan2024llminferenceunveiledsurvey}. Consequently, we study the critical research question:

\begin{center}
\textit{Can we design test-time scaling methods which optimize for inference latency while maintaining accuracy?}
\end{center}
The problem of improving the latency of autoregressive generation has received significant attention, and resulted in approaches such as speculative decoding~\citep{leviathan2022fast,chen2023acceleratinglargelanguagemodel}. Speculative decoding uses a smaller, faster draft model to propose candidate tokens, which are validated in parallel by a larger target model, reducing the total memory loading time and hence the overall decoding latency. 

\begin{figure*}
    \centering
\includegraphics[trim=225 300 225 150, clip, width=0.9\linewidth]
    {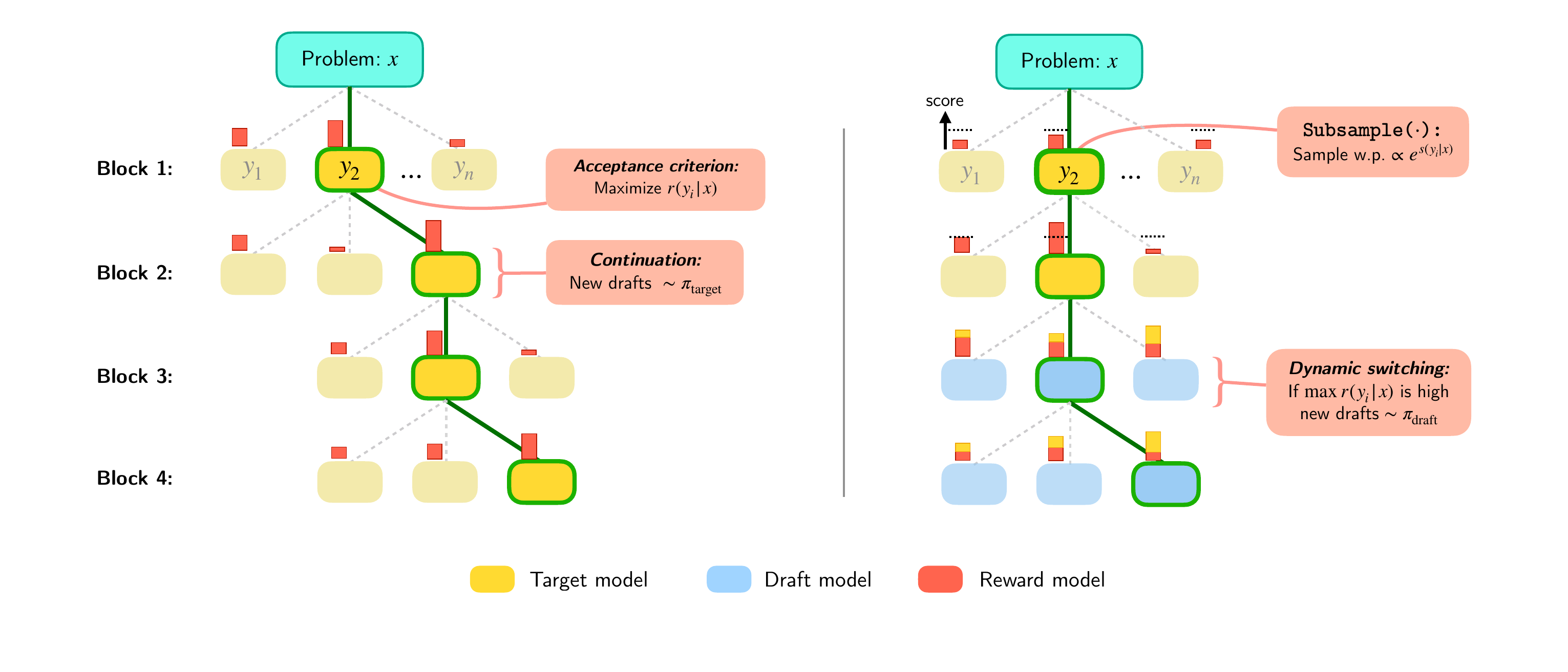}
    \caption{Visualization of Beam Search vs \specalign. In beam search the trajectories are generated by the target model ($p$) and scored using a PRM ($r$). In contrast, in \specalign the beams are dynamically switched to generation from the draft model, and scored by a combination of target and PRM model, resulting in better latency-performance tradeoff. Draft proposal and selection are further controlled by the \subsample subroutine (see \cref{sec:algorithm} for details).}
    \label{fig:special}
    \vspace{-1.2em}
\end{figure*}

In this work, we study how speculative drafts can be used to develop new test-time scaling methods which optimize for latency. Our investigation builds on the family of beam search or tree search~\citep{mudgal2023controlled, beeching2024scalingtesttimecompute,sun2024fastbestofndecodingspeculative,qiu2024treebon,snell2025scaling} algorithms for test-time scaling. In each iteration multiple candidate reasoning steps (i.e., beams) are generated in parallel from a target model, and the most promising ones are kept to continue generation from, based on a reward signal (i.e., a process reward model (PRM) \citep{zhang2025lessons} or self rewards \citep{yuan2024self}). The multi-step structured exploration makes them particularly effective at balancing accuracy and computational effort. At the same time, it also offers the flexibility of dynamically switching reasoning models for easier/harder steps.

We propose a new algorithm, \specalign~(SPECulative drafting for faster test-time Scaling), which relies on a speculative drafting step for improving latency. A speculative drafting step uses a fast draft model to propose candidate reasoning steps, and jointly uses a larger, more capable target model and a reasoning-specific PRM to select from the candidates. To make sure that our algorithm incurs minimal decrease in accuracy, we design novel dynamic switching scheme to identify reasoning steps where speculative drafting would not hurt latency, as well as a theoretically-grounded score metric integrating the evaluations of the large model and the reward model for better draft selection.

Our contributions can be summarized as follows:
\begin{myenum}
    \item We propose a novel test-time scaling algorithm, \specalign (\Cref{alg:special}), which utilizes speculative drafts to reach favorable latency-accuracy tradeoff. Our algorithm starts by generating from the target model and dynamically switches to speculative drafting for high reward traces to reduce latency while maintaining accuracy. This builds on the novel insight that the accuracy gap between  a small model and a large model is small for these traces, which could be of independent interest. To the best of our knowledge, other works building on speculative drafting all start with a draft model and switch to target model when the reward signal is low, resulting in lower accuracy or wasted latency for generating speculative drafts.
     \item We design a novel strategy to integrate scores from target models and PRMs for better draft selection.      Theoretically, we show that the approach converges gracefully to the optimal solution of a KL-regularized reward maximization objective as beam-width (i.e., parallel compute) increases, demonstrating the theoretical grounding of the proposed approach. We present a novel analysis of the soft best-of-$N$ algorithm, and extends this to speculative drafting and multi-step reasoning.
    \item We evaluate on MATH-500, AMC23, OlympiadBench and GPQA datasets, demonstrating that \specalign achieves up to 18\% reduction in latency while achieving on-par or higher accuracy compared to beam search and other baselines, using (Qwen-1.5B-Instruct \& Qwen-7B-Instruct) and (Llama3.2-1B-Instruct \& Llama3.1-8B-Instruct) as draft-target model pairs with Skywork-o1-Open-PRM-Qwen-2.5-7B as the process reward model.
\end{myenum}

We conduct ablation studies to show that both design components contribute significantly in improving the latency-accuracy curve over baselines. We refer readers to \Cref{sec:related} for a detailed comparison to related work due to space constraints.
\vspace{-0.5em}
\section{Speculative drafting with reward-guided soft verification} \label{sec:algorithm}
\vspace{-0.5em}


\begin{figure}[h!]
  \centering
  \begin{subfigure}[b]{0.2\textwidth}
    \centering
    \includegraphics[width=\linewidth]{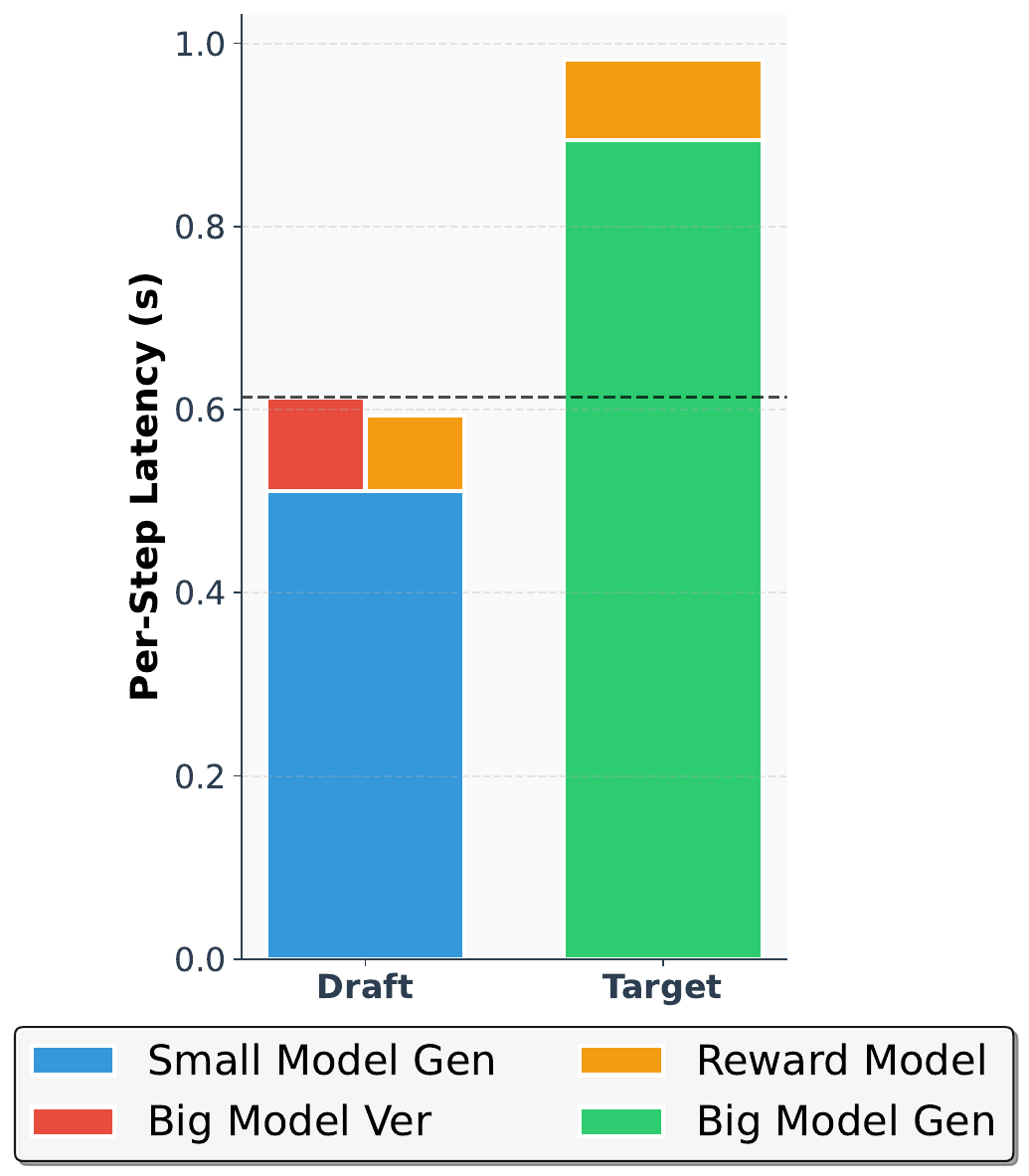}
    \caption{}
    \label{fig:1a}
  \end{subfigure}%
    \begin{subfigure}[b]{0.3\textwidth}
    \centering
    \includegraphics[width=\linewidth]{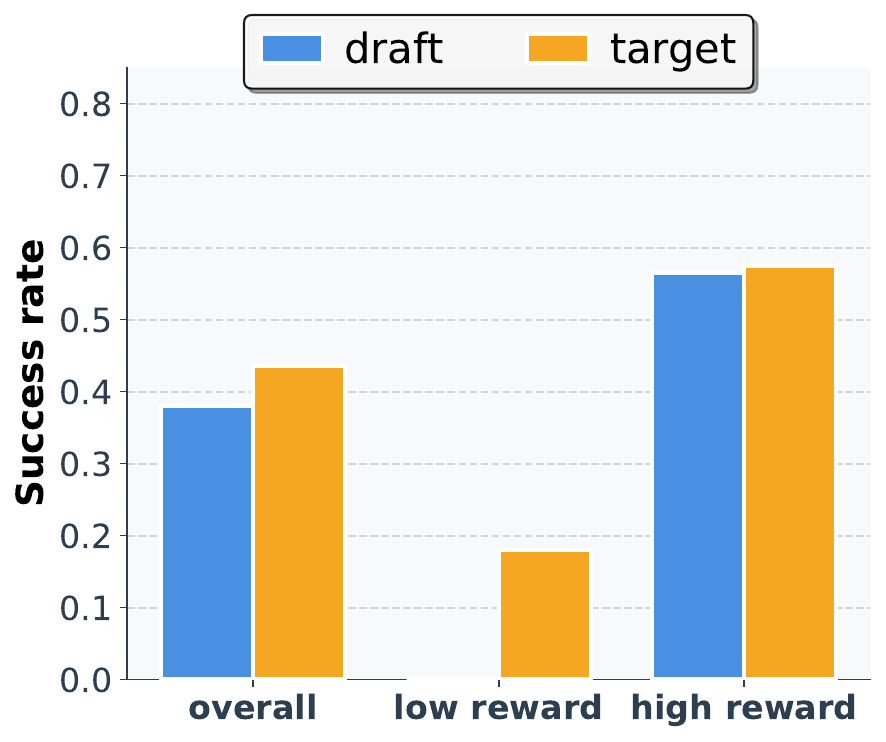}
    \caption{}
    \label{fig:1b}
  \end{subfigure}

  \caption{\textbf{(a)} Latency of generation from the target model (Qwen2.5-7B-Instruct) vs. generation from the draft model (Qwen2.5-1.5B-Instruct) with scoring: we observe that latency savings from using the draft model to generate candidate blocks overcomes the overhead of scoring by the target model and PRM. \textbf{(b)} We generate the first $8$ steps of reasoning from the target model, and complete the remaining steps either using the draft model or the target model. The initial $8$-step partial reasoning traces generated by the target model are bucketed into high reward (PRM score at least $0.5$) and low reward (PRM score at most $0.5$). Using the draft model to complete high reward traces solves a very similar proportion of problems compared to if they were completed using the target model. The performance is dismal ($\approx 0$) when draft model is used to complete low reward traces.\vspace{-0.8em}} 
  \label{fig:explanatory_toys}
\end{figure}

Before presenting the algorithm details, we discuss some general observations that lead to the development of our \specalign algorithm, which could be of independent interest. The most costly step (in terms of latency) for beam search with a large target model is autoregressive generation to obtain the beams (i.e., candidate reasoning blocks). Our main technique for reducing this latency cost is to instead speculatively generate beams via autoregressive generation from a smaller, and faster draft model, which are then scored using the target model and reward model. Due to the fact that autoregressive sampling in modern LLMs is bottlenecked by memory bandwidth, evaluating a reasoning block with a target model (i.e., computing logits of each token) is much faster than generating a reasoning block from the same model, despite taking the same number of FLOPs. This also holds even when combined with the draft model generation time, which is faster due to its smaller size. Moreover, the scoring by the target model and reward model can be executed in parallel to further reduce latency. {While using small draft models for improving latency is the subject of several past work on speculative decoding \citep{leviathan2022fast}, we carry out an experiment in \Cref{fig:1a} to understand the typical per-step latency improvements with modern reasoning and reward models.}

\begin{observation}[\textit{Low-latency draft model generation}] \label{low-latency}
Generating multiple candidate sequences from the draft model, followed by scoring them using the larger target model and a reward model can be significantly faster compared to generating responses directly from the target model.
\end{observation}

\vspace{-0.25em}
While generating reasoning blocks from a draft model is faster, their average quality will be worse since a less capable model is used for generation. Careful draft selection is needed to guarantee a minimal drop in accuracy. Speculative decoding~\citep{leviathan2022fast, sun2023spectr, miao2024specinfer} validates generated drafts by attempting to match the target model distribution. However, in the process, drafts may be unnecessarily discarded, even if they happened to collect high rewards. Thus, it is reasonable to expect that a \emph{soft} verification mechanism, which relaxes the distribution matching guarantee can obtain a speedup, while still maintaining high accuracy on reasoning tasks. 

On the other hand, reasoning datasets typically evaluate model capabilities across a range of skills and difficulty levels. At a finer granularity, certain steps of the solution to a problem might be easy, and a large target model may be overprovisioned to generate steps that the draft model can already solve. For harder steps, it makes sense to defer to the more capable target model for candidate generation.

These discussions motivate the following two questions,

\vspace{-0.5em}
\begin{myenum}
\item 
How can we identify steps where speculative drafting will not significantly hurt accuracy?
\item Given a set of candidate steps
how do we best select candidates based on scores from the target model and the reward model?
\end{myenum}
\vspace{-0.5em}
To gain intuition towards answering the first question, we study how the reward of reasoning trace evolves over the course of a generation, and how it relates to the final accuracy of the generated response. In fig.~\ref{fig:1b} we carry out two inference runs, where we start by generating $8$ steps of reasoning from the target model. In one case, we continue the rest of the trace using the large target model, and in the other we switch to the draft model and continue thereon. When we condition on the traces having high reward (as computed by a PRM) at the 8th step, we observe that completing the trace using the draft model affects accuracy very minimally compared to completing with the target model. This observation motivates switching between the models based on process reward of the trajectory.



\begin{observation}[\textit{Draft model can complete high-reward traces}] \label{obs:high-reward}
Generating blocks from the draft model does not hurt accuracy when the PRM score of a partial trace is high.
\end{observation}
\vspace{-0.75em}

As a counterpart to this observation, we note from \Cref{fig:1b} that continuing generation from the draft model starting from low reward traces performs poorly. To minimize any decrease in accuracy from speculative drafting, it is reasonable to begin with generating steps from the target model until high reward traces are encountered.

\vspace{-0.25em}
\subsection{The \specalign Algorithm}
\vspace{-0.25em}

With the insights developed above, we present our main algorithm \specalign. The algorithm follows an overall draft-and-select framework where the final response is generated block-by-block. In each iteration, the algorithm uses a generation model $\pigen$ to propose candidate blocks $\{ Y_i \}_{i=1}^n$ to extend the current reasoning trace $\currtrace$ through autoregressive sampling. $\pigen$ is initialized to be a larger reasoning model $\pitarget$, and over the course of generation switches to a smaller draft model $\pidraft$. When $\pigen = \pidraft$, we call this a speculative drafting step.

The candidate blocks are then evaluated by both a larger, more capable reasoning model $\pitarget$ (for speculative drafting step), and a reward model $r$. A score (\cref{eq:score1}) is assigned to each candidate based on the evaluations, which is then used to select a candidate block (summarized in \Cref{alg:subsample}). The selected block is appended to the existing response to begin the next iteration.

We use the score to adaptively decide which model to use to generate candidate blocks. This is most closely related to the approach of cascades \citep{dohan2022languagemodelcascades,jitkrittum2023does,yue2024largelanguagemodelcascades,narasimhan2024faster} and aims to ensure that the larger model generates candidates for the harder steps of a problem, while the smaller draft model is used for the easier steps.

\subsection{Candidate selection: \subsample} \label{sec:subsample}

Next we describe the \subsample mechanism which carries out the selection process of candidate blocks $\{ Y_i \}_{i=1}^n$. The goal is to select a response which simultaneously has high reward (i.e., completions are likely to lead to a correct response)
while not having very low probability under the larger target model. \subsample carries this out by combining the evaluations from both the large reasoning model and the reward model in the following score function:

\begin{align}
\vspace{-1em}
    S_i = \log \left( \frac{\pitarget (Y_i|\currtrace)}{\pigen (Y_i|\currtrace)} \right) + \beta r (Y_i|\currtrace)
    \label{eq:score1}
\end{align}
When $\pigen = \pitarget$, this collapses to $r$. With $\pigen = \pidraft$ this also includes the log density ratio of the block between draft and target models. We formalize the choice of score in \Cref{subsec:score} through the lens of optimizing the expected reward with KL regularization (to the target model).

\vspace{-0.25em}
\begin{algorithm}[h!]
   \caption{\specalign~meta algorithm}
   \label{alg:special}
\begin{algorithmic}[1]
   \STATE {\bfseries Input:} Prompt $X$, draft model $\pidraft$,
   \STATE \hspace{2.75em} target model $\pitarget$, PRM $r$,
   \STATE {\bfseries Hyperparameters:} Beam-width $n$,
   \STATEx \hspace{8em} Block-size $\gamma$,
   \STATEx \hspace{8em} Switching threshold $\tau$.
   \STATE {\bfseries Initialize:} Partial response $\currtrace \gets X$
   \STATE \hspace{4.2em} Draft generator $\pigen \gets \pitarget$
   \STATEx {\color{blue} \LineComment{Generation begins with target model.}}
   \STATE {\bfseries Repeat}
   \STATEx \underline{\textit{Step I.}} Generate candidate blocks
   \STATE Draw $n$ blocks $\sim \pigen$ of block-size $\gamma$,
   \vspace{-0.75em}
   \begin{equation*}
       \Ydraft = \{ Y_i \}_{i=1}^n \! \overset{\text{i.i.d.}}{\sim} \! \pigen (\cdot | \currtrace)
   \end{equation*}
   \vspace{-1em}
   \STATEx \underline{\textit{Step II.}} Select candidate block
   \STATE $Y = \subsample( \Ydraft, \beta, \pigen | \currtrace)$,
   \STATE $\currtrace \gets (\currtrace, Y)$ {\color{blue} \LineComment{$\subsample(\cdot)$ selects a}}
   \STATEx {\color{blue} \LineComment{draft using scores in \cref{eq:score1}}}
   \vspace{0.25em}
    \STATEx \underline{\textit{Step III.}} Dynamic switching
    \STATE \algorithmicif \ $\max_{i \in [n]} r(Y_i|\currtrace) > \tau$ \LineLabel{line:9}
   \vspace{0.2em}
    \STATE \hspace{1em} $\pigen \gets \pidraft$  {\color{blue} \LineComment{Switch to generating from draft}}
   \STATEx {\color{blue} \LineComment{model if reward signal $r$ is large}}
   \vspace{0.25em}
   \STATE {\bfseries Until:} Target sequence length is achieved.
  \STATE {\bfseries Return:} $\currtrace$
\end{algorithmic}
\end{algorithm}

\begin{algorithm}[h!]
   \caption{\subsample~subroutine}
   \label{alg:subsample}
\begin{algorithmic}[1]
   \STATE {\bfseries Input:} Context $\currtrace$, Draft blocks $\{ Y_i \}_{i=1}^n$,
   \STATEx \hspace{2.75em} Target model $\pitarget$, PRM $r$,
   \STATEx \hspace{2.75em} Draft generator $\pigen$,
   \STATE {\bfseries Hyperparameters:} Threshold $\tau$, Temp. $\beta$
   \STATE \textbf{Initialize:} $L \gets [n]$
   \STATE \algorithmicfor\ $i=1$ {\bfseries to} $n$,
   \STATE \hspace{0.65em} Define,
   \vspace{-0.75em}
   \begin{equation*}
       S_i = \log \left( \frac{\pitarget (Y_i|\currtrace)}{\pigen (Y_i|\currtrace)} \right) + \beta r (Y_i|\currtrace)
   \end{equation*}
   \STATE Sample $i^\star \in [n]$ w.p., $P(i^\star = i) \propto e^{S_i} \mathbb{I} (i \in L)$.
  \STATE {\bfseries Return:} $Y_{i^\star}$
\end{algorithmic}
\end{algorithm}




\subsection{Deferrals via dynamic switching}\label{sec:cascade}

\specalign uses a deferral rule to determine which (draft or target) model to use for generating candidate blocks in each subsequent iteration. Along the lines of \Cref{obs:high-reward}, the algorithm begins generation from the target model, since the PRM reward is low to begin with. The algorithm proposes to switch generating future reasoning steps from the draft model if high reward partial traces are present in the current set of beams (i.e., their reward exceeds a threshold). This builds on our finding in \Cref{obs:high-reward}: it is beneficial from a latency point-of-view to switch to generating from the draft model when we already have traces with high reward.

\section{A Theoretical Perspective on \specalign}

\subsection{Motivation behind scores in \subsample} \label{subsec:score}


\label{sec:motivation}
The design of the score (\cref{eq:score1}) in \subsample is motivated by the objective of solving a one-step KL-regularized reward maximization problem \citep{jaques2019way,stiennon2020learning}. For the base model $\pitarget$ and a reward function $r$, the objective is to maximize,
\vspace{-.05in}
\begin{align} \label{eq:kl-reg-reward-max}
    \mathcal{L}_\beta (\pi) = \mathbb{E}_{\rho,\pi} [r(Y|X)] - \frac{1}{\beta} \kl(\pi \|\pitarget).
\vspace{-.05in}
\end{align}
The optimal solution to this objective has an explicit form~\citep{korbak2022rl, korbak2022reinforcement,yang2024asymptotics}:
\begin{align} \label{eq:opt-define}
    \pi^\star_\beta (y|x) \propto \pitarget (y|x) e^{\beta r(y|x)}.
\end{align}
For a different generator model $\pigen$,
let $S (y|x) = \log(\pitarget (y|x) / \pigen (y|x) ) + \beta r(y|x)$. Then, the optimal policy can be written down in a different form by importance reweighting. Namely,
\begin{align} \label{eq:opt-redefine}
    \pi^\star_\beta (y|x) \propto \pigen (y|x) e^{S(y|x)}.
\end{align}
Thus, at a high level, any inference-time sampling approach that generates multiple candidate blocks from $\pitarget$ and uses $r$ to subselect one, has a one-to-one mapping to a different approach which generates candidates from $\pidraft$ and instead use $S(\cdot|x)$ for subselection. Alternatively, \cref{eq:kl-reg-reward-max} is equivalent to the \textit{score maximization problem}, subject to a KL-penalty \textit{with respect to the generator model}, $\mathbb{E}_{\rho,\pi} [S(Y|\currtrace)] - \frac{1}{\beta} \kl(\pi \|\pigen)$.

We will defer a more detailed analysis and interpretation of \subsample as an instantiation of sequential monte carlo (SMC) or particle filtering \citep{rubin1987calculation,naesseth2019elements}. This method attempts to sample from the distribution in \cref{eq:opt-redefine} by approximating the partition function, $Z(x) = \mathbb{E}_{Y \sim \pigen (\cdot|x)} [ \exp(S(Y|x))]$, via computing an empirical estimate over a finite set of ``particles''.

\subsection{Theoretical guarantees for \specalign}

In this section, we describe theoretical guarantees for \specalign. The nature of result we will establish is that, under certain assumptions, as the beam-width increases, the distribution of output of \specalign converges to the optimal policy of the KL-regularized reward maximization problem (\cref{eq:kl-reg-reward-max}). First we formalize some notation below.

\paragraph{Notation.} Denote the space of prompts as $\mathcal{X}$, the space of responses as $\mathcal{Y}$. Given a (partial) response $y$, let $(y_1,y_2,\cdots)$ denote its blocks and let $y_{\le t} = (y_1,\cdots,y_t)$ denote the prefix of the first $t$ blocks. By default, we will let $y_{\le 0} = \emptyset$. $H$ denotes the horizon: the maximum number of blocks that the model is allowed to generate. We will assume that there exists a target distribution over prompts, denoted $\rho$, and the notation $\mathbb{E}_{\rho,\pi} [\cdot]$ is shorthand for $\mathbb{E}_{X \sim \rho, Y \sim \pi(\cdot|X)} [\cdot]$, where $Y$ is a full response drawn from $\pi$. The goal is to design an algorithm that samples from  the policy that maximizes the KL-regularized expected reward (\cref{eq:kl-reg-reward-max}).

Next, we introduce the ``idealized'' notion of PRM under which we demonstrate guarantees for \specalign. We begin with the notion of an optimal KL-regularized value function \citep{ziebart2008maximum,zhou2025q}.

\begin{definition}[Optimal KL-regularized value function] \label{def:value}
For a policy $\pi$, and any prompt $x \in \mathcal{X}$ and partial trace $y_{\le t}$, the optimal KL-regularized value of a policy is defined as,
\begin{align*}
    V^{\pi}_\beta (x, y_{\le t}) = \frac{1}{\beta} \log \left( \mathbb{E}_{Y \sim \pi ( \cdot|x,y_{\le t})} \left[ e^{\beta r (x, Y)} \right] \right)
\end{align*}
where $Y$ is a completion of the partial trace $y_{\le t}$. For any full response $y \in \mathcal{Y}$, $V^{\pi}_\beta (x, y) = r(x,y)$.
\end{definition}

\begin{definition}[Idealized Process Reward Model / optimal KL-regularized advantage function] \label{def:prm} Let $\rprm$ denote the PRM corresponding to the \textit{KL-regularized advantage function} with respect to some policy $\pi$. Formally, given any prefix of blocks $y_{\le t-1}$ of length $t-1$ and a subsequent block $y_t$,
\begin{align*}
	\rprm^\pi (y_t |x, y_{\le t-1}) &= V^{\pi}_\beta (x, y_{\le t}) - V^{\pi}_\beta (x, y_{\le t-1})
\end{align*}
where $V^{\pi}_\beta$ is defined in \Cref{def:value}. The idealized PRM we consider is $\rprmtarget = \rprm^\pi$ for $\pi=\pitarget$.
\end{definition}

This above notion of PRM has also been used in recent works \citep{mudgal2023controlled, zhou2025q,brantley2025accelerating} for solving the KL-regularized reward maximization problem. The latter work provides an efficient and scalable approach for learning this PRM from offline datasets, and shows that policies trained by carrying out RL against this PRM perform well empirically, even on low generation budgets. For the purpose of keeping the theoretical presentation cleanest, we analyze \specalign when implemented with an exact version of this PRM. Our main result analyzes \specalign under a subtle modification to make analysis more tractable, which is described in \Cref{app:warm-up}. This pertains to drawing the number of beams generated per step from a Poisson distribution.


\begin{figure*}[t]
    \centering
    \includegraphics[width=0.95\linewidth]{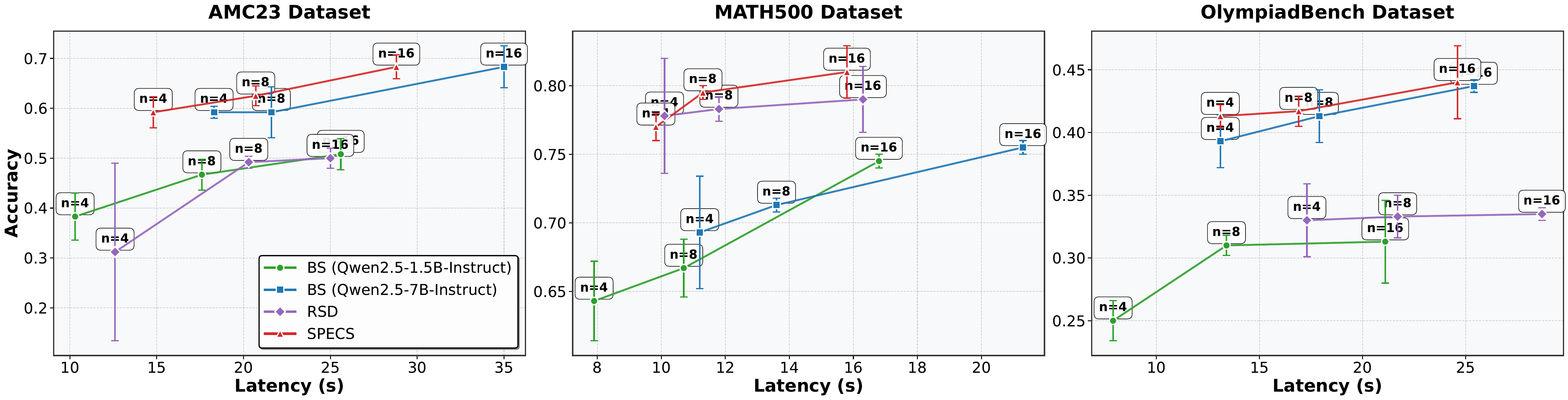}
    \caption{Accuracy vs per-query average latency curves for \specalign and beam search with draft model, target model and RSD. The error bars show the standard deviation computed over 3 independent runs. The draft model is Qwen2.5-1.5B-Instruct and the target model is Qwen2.5-7B-Instruct. }
    \label{fig:accuracy_latency_tradeoff}
\end{figure*}

\begin{theorem} \label{theorem:specalign-block}
Assume $n \ge 3$ and suppose \specalign \ is implemented with the idealized PRM as defined in \Cref{def:prm} with beam-width $n$. Then, for reasoning problems over $H$ blocks, in the finite-block length setting, the policy $\pi_{\specalign}$ returned by \specalign \ satisfies,
\begin{align} \label{eq:thm}
    \kl (\pi_{\specalign} , \pi^\star_\beta) \le \widetilde{\mathcal{O}}_n \left( H \cdot \Cdtstep^2 \frac{e^{2 \beta R}}{n^{3/2}} \right)
\end{align}
Here, we assume that the PRM reward range is $[0,R]$. The block-level coverage coefficient $\Cdtstep$ is,
\begin{align}\label{eq:Cdtstep}
    \Cdtstep = \sup_{t \ge 0} \sup_{y_{\le t}} \left\| \frac{\pitarget (\cdot|x, y_{\le t})}{\pidraft (\cdot|x, y_{\le t})} \right\|_\infty \left\| \frac{\pidraft (\cdot|x, y_{\le t})}{\pitarget (\cdot|x , y_{\le t})} \right\|_\infty \nonumber
\end{align}
where $\pi (\cdot|x,y_{\le t})$ denotes the next-block distribution given the partial trace $y_{\le t}$ and prompt $x$.
\end{theorem}

The proof of this result is deferred to \Cref{app:theory}.

\paragraph{Additional results on robustness to PRM noise.} Our theoretical analysis can be further extended to cases where the PRM is not idealized (cf. \Cref{def:prm}). In \Cref{sec:apx-prm} we show that \specalign's guarantees gracefully degrades with the (suitably defined) approximation noise level of the PRM.


\vspace{-.1in}\section{Experiments} \label{sec:experiments}



In this section, we present empirical evaluation results on the \emph{latency-accuracy tradeoff} of \specalign and compare it with test-time scaling baselines in the literature.

\subsection{Experimental Setup} \label{sec:expsetup}

\textbf{Datasets:} We conduct experiments on a diverse suite of reasoning benchmarks. For mathematical reasoning, we use AMC23~\citep{faires2022contest}, MATH500~\citep{hendrycks2021measuring}, and OlympiadBench~\citep{he2024olympiadbench}. The AMC23 dataset comprises 40 problems testing algebra, geometry, number theory, and combinatorics. On the other hand, the MATH500 benchmark evaluates mathematical problem-solving, while OlympiadBench contains challenging olympiad-level problems requiring deep reasoning. Following prior works \citep{qiu2024treebon, zhang2024accelerating}, we use a randomly selected subset of 100 questions from MATH500 and OlympiadBench to expedite evaluation and report average performance across multiple seeds. We also evaluate 
 \specalign on non-mathematics datasets using the GPQA benchmark~\citep{rein2024gpqa}. GPQA consists of 448 high-quality multiple-choice questions in biology, physics, and chemistry, written by domain experts. 

\textbf{Models:} We use several models from the Qwen and Llama families as our base generation models. We evaluate over two draft-target model pairs, Qwen2.5-1.5B-Instruct and Qwen2.5-7B-Instruct from the Qwen family of models~\citep{yang2024qwen2}, as well as Llama3.2-1B-Instruct and Llama-3.1-8B-Instruct~\citep{dubey2024llama}. These models vary significantly in capability, allowing us to test the scalability of our approach. For reward-guided selection in both beam search and \specalign, we use the Skywork-o1-Open-PRM-Qwen-2.5-7B~\citep{he_2024_16998085}, which is specifically trained for mathematical reasoning tasks. It is worth pointing out that the generator (draft/target) models have not been specifically finetuned for math reasoning tasks. 

\textbf{Baselines:} 
Our primary baseline is standard beam search guided by the same PRM used within \specalign. 
This involves generating multiple candidate sequences (beams) block-by-block and selecting the highest scoring sequences according to the reward model \citep{snell2025scaling,beeching2024scalingtesttimecompute}
We compare \specalign against beam search using the target model (high latency/accuracy) and the draft model (low latency/accuracy). 
We also compare against Reward-Guided Speculative Decoding (RSD)~\citep{liao2025rewardguidedspeculativedecodingefficient}, which uses PRM signals for rejection sampling. We implement a stronger multi-beam variant of the standard single-beam method: we select the highest-scoring of $n$ draft beams, but if its reward falls below the recommended threshold of $0.7$, we instead select the best of $n$ beams generated by the target model. We call this implementation RSD++.

\textbf{Evaluation Metrics:} We report accuracy (percentage of correctly solved problems) and average wall-clock latency per problem. Our primary focus is to optimize the pareto frontier of the latency-accuracy trade-off.

\vspace{-.1in}\subsection{Results}

\paragraph{Performance on Mathematical Reasoning.}
As shown in Table \ref{tab:results_table}, for a fixed beam-width $n$, \specalign achieves accuracy comparable to or exceeding the strong baseline of beam search with the large model. This performance gain is accompanied by up to $\sim$\textit{18\%} latency savings, demonstrating a favorable trade-off. 
The percentage of steps generated by the large model increases for the harder datasets (AMC23, OlympiadBench), indicating that \specalign is able to adapt the usage of the large model to the hardness of questions.


\begin{table*}[h!]
\caption{Performance comparison of methods across datasets and beam search settings. Here $n$ denotes per-step beam width, `Lat. (s)' is average per problem latency in seconds and `\% \texttt{\textcolor{darkred}{B}}' represents the percentage of steps generated by the big model \texttt{\textcolor{darkred}{B}} averaged across problems. (\texttt{\textcolor{darkgreen}{S}}: small model, \texttt{\textcolor{darkred}{B}}: big model). ($\star$) we implement a much stronger variant of RSD in our experiments for beam search that we call RSD++ as a competitive baseline to compare \specalign against; more details are provided in \Cref{sec:expsetup}.}
\centering
\vspace{0.2em}
\resizebox{0.94\linewidth}{!}{
\begin{tabular}{c|l|cccc|cccc}
\toprule
\textbf{Method} & $n$ & \textbf{Acc} (\%) $\uparrow$ & \textbf{Lat} (s) $\downarrow$ & \textbf{Lat/step} (s) $\downarrow$ & \textbf{\%} \texttt{\textcolor{darkred}{B}} & \textbf{Acc} (\%) $\uparrow$ & \textbf{Lat} (s) $\downarrow$ & \textbf{Lat/step} (s) $\downarrow$ & \textbf{\%} \texttt{\textcolor{darkred}{B}} \\
\cmidrule(lr){1-10}
 &  & \multicolumn{4}{c}{Qwen2.5 Family} & \multicolumn{4}{c}{Llama3 Family} \\
\cmidrule(lr){1-10}
\multicolumn{2}{c}{} & \multicolumn{8}{c}{AMC23} \\
\cmidrule(lr){1-10}
\Smodel & \color{gray}4  & \color{gray}38.3 $\pm$ 4.7 & \color{gray}10.3 $\pm$ 0.6 & \color{gray}0.80 $\pm$ 0.01 & \color{gray}0  
& \color{gray}11.7 $\pm$ 5.1 & \color{gray}5.94 $\pm$ 0.24 & \color{gray}0.38 $\pm$ 0.15 & \color{gray}0 \\
\Bmodel & 4               
& 59.2 $\pm$ 1.2            & 18.3 $\pm$ 1.5            & 1.46 $\pm$ 0.06           & 100
& 26.3  $\pm$ 1.3 & 12.7 $\pm$ 1.3 & 0.59 $\pm$ 0.17  & 100 \\
RSD++ & 4               
& 31.2 $\pm$ 17.8 & 12.6 $\pm$ 3.8 & 0.82 $\pm$ 0.60 & 60.5 $\pm$ 1.4 
& 15.0 $\pm$ 7.4            & 8.9 $\pm$ 1.1            & 0.66 $\pm$ 0.79           & 91.8 $\pm$ 0.6 \\
\graycell{\specaligntable} & \graycell{4}  
& \graycell{\textbf{59.2 $\pm$ 3.1}} & \graycell{14.8 $\pm$ 0.5} & \graycell{1.42 $\pm 0.13$}               & \graycell{67.9 $\pm$ 1.8}
& \graycell{\textbf{26.3 $\pm$ 5.3}} & \graycell{15.4 $\pm$ 1.8} & \graycell{0.65 $\pm$ 0.03} & \graycell{96.4 $\pm$ 1.2} \\
\cmidrule(lr){1-10}
\Smodel & \color{gray}8  
& \color{gray}46.7 $\pm$ 3.1 & \color{gray}17.6 $\pm$ 2.3 & \color{gray}1.14 $\pm$ 0.01 & \color{gray}0
& \color{gray}14.2 $\pm$ 4.2 & \color{gray}11.85 $\pm$ 0.22 & \color{gray}0.61 $\pm$ 0.46 & \color{gray}0 \\
\Bmodel & 8              
& 59.2 $\pm$ 5.1            & 21.6 $\pm$ 1.1            & 2.00 $\pm$ 0.06           & 100
 & 27.5 $\pm$ 2.0 & 19.1 $\pm$ 0.96 & 0.84 $\pm$ 1.39 & 100
 \\
RSD++ & 8               
& 49.2 $\pm$ 1.2 & 20.3 $\pm$ 1.2 & 1.99 $\pm$ 4.43 & 61.1 $\pm$ 0.3
& 18.3 $\pm$ 5.9            & 10.8 $\pm$ 1.9           & 0.91 $\pm$ 0.8           & 90.0 $\pm$ 1.1 \\
\graycell{\specaligntable} & \graycell{8}  
& \graycell{\textbf{62.5 $\pm$ 2.0}} & \graycell{20.7 $\pm$ 2.0} & \graycell{1.94 $\pm 0.11$}               & \graycell{62.8 $\pm$ 2.1}
& \graycell{\textbf{30.0 $\pm$ 2.5}} & \graycell{28.0 $\pm$ 1.2} & \graycell{1.04 $\pm$ 0.05} & \graycell{93.6 $\pm$ 1.0} \\
\cmidrule(lr){1-10}
\Smodel & \color{gray}16 
& \color{gray}50.8 $\pm$ 3.1 & \color{gray}25.6 $\pm$ 3.1 & \color{gray}1.81 $\pm$ 0.03 & \color{gray}0
& \color{gray}24.2 $\pm$ 4.7 & \color{gray}24.47 $\pm$ 1.21 & \color{gray}1.13 $\pm$ 0.95 & \color{gray}0\\
\Bmodel & 16              
& 68.3 $\pm$ 4.2            & 35.0 $\pm$ 2.6            & 2.80 $\pm$ 0.13           & 100
& 32.5 $\pm$ 4.7 & 32.1 $\pm$ 2.4 & 1.24 $\pm$ 1.94 & 100 \\
RSD++ & 16               
& 50.0 $\pm$ 2.0 & 25.0 $\pm$ 0.8 & 2.56 $\pm$ 1.78 & 55.6 $\pm$ 2.2
& 20.0 $\pm$ 4.1            & 16.3 $\pm$ 0.4           & 1.18 $\pm$ 1.09           & 90.0 $\pm$ 2.0 \\
\graycell{\specaligntable} & \graycell{16} 
& \graycell{\textbf{68.3 $\pm$ 2.4}} & \graycell{28.8 $\pm$ 3.5} & \graycell{2.70 $\pm$ 0.05}               & \graycell{70.3 $\pm$ 1.7}
& \graycell{\textbf{36.3 $\pm$ 3.8}} & \graycell{44.2 $\pm$ 2.9} & \graycell{1.56 $\pm$ 0.10} & \graycell{98.1 $\pm$ 1.1} \\

\cmidrule(lr){1-10}
\multicolumn{2}{c}{} & \multicolumn{8}{c}{MATH500} \\
\cmidrule(lr){1-10}
\Smodel &  \color{gray}4  
& \color{gray}64.3 $\pm$ 2.9 &  \color{gray}7.9 $\pm$ 1.0  & \color{gray}0.65 $\pm$ 0.04 & \color{gray}0
& \color{gray}23.1 $\pm$ 1.5 &  \color{gray}4.82 $\pm$ 0.40  & \color{gray}0.27 $\pm$ 3.95 & \color{gray}0 \\
\Bmodel & 4                
& 69.3 $\pm$ 4.1             & 11.2 $\pm$ 3.7             & 1.13 $\pm$ 0.01            & 100
& 32.0 $\pm$ 3.3 & 8.4 $\pm$ 0.1 & 0.39 $\pm$ 0.29 & 100 \\
RSD++ & 4               
& \textbf{77.8 $\pm$ 4.2} & 10.1 $\pm$ 0.2 & 1.14 $\pm$ 0.52 & 32.3 $\pm$  0.6
& \textbf{35.7 $\pm$ 4.1}            & 11.2 $\pm$ 0.6            & 0.55 $\pm$ 2.15           & 72.7 $\pm$ 1.1 \\
\graycell{\specaligntable} & \graycell{4}  
& \graycell{{77.0 $\pm$ 1}} & \graycell{9.83 $\pm$ 0.5}  & \graycell{1.11 $\pm$ 0.14} & \graycell{49.0 $\pm$ 0.8}
& \graycell{{35.3 $\pm$ 3.9}} & \graycell{10.8 $\pm$ 0.8} & \graycell{0.40 $\pm$ 1.17} & \graycell{91.1 $\pm$ 0.9} \\
\cmidrule(lr){1-10}
\Smodel &  \color{gray}8  
& \color{gray}66.7 $\pm$ 2.1 &  \color{gray}10.7 $\pm$ 0.6 & \color{gray}1.01 $\pm$ 0.03 & \color{gray}0
& \color{gray}28.3 $\pm$ 1.7 &  \color{gray}9.55 $\pm$ 0.40 & \color{gray}0.45 $\pm$ 0.68 & \color{gray}0 \\
\Bmodel & 8                
& 71.3 $\pm$ 0.5             & 13.6 $\pm$ 1.9             & 1.51 $\pm$ 0.00            & 100
& 33.0 $\pm$ 2.2  & 12.1 $\pm$ 0.1 & 0.52 $\pm$ 0.3 & 100 \\
RSD++ & 8               
& 78.3 $\pm$ 0.9 & 11.8 $\pm$ 0.3 & 1.37 $\pm$ 0.83 & 28.4 $\pm$ 0.2 
& 39.3 $\pm$ 2.1            & 15.2 $\pm$ 0.5            & 0.71 $\pm$ 2.02           & 73.9 $\pm$ 1.2 \\
\graycell{\specaligntable} & \graycell{8}  
& \graycell{\textbf{79.5 $\pm$ 0.5}} & \graycell{11.3 $\pm$ 0.6} & \graycell{1.31 $\pm$ 0.56} & \graycell{45.5 $\pm$ 0.3}
& \graycell{\textbf{40.7 $\pm$ 2.6}} & \graycell{15.2 $\pm$ 0.74} & \graycell{0.57 $\pm$ 0.09} & \graycell{85.2 $\pm$ 1.6} \\
\cmidrule(lr){1-10}
\Smodel &  \color{gray}16 
&  \color{gray}74.5 $\pm$ 0.5 &  \color{gray}16.8 $\pm$ 1.9 & \color{gray}1.57 $\pm$ 0.10 &  \color{gray}0
&  \color{gray}34.3 $\pm$ 0.6 &  \color{gray}17.42 $\pm$ 0.69 & \color{gray}0.78 $\pm$ 2.42 &  \color{gray}0 \\
\Bmodel & 16               
& 75.5 $\pm$ 0.5             & 21.3 $\pm$ 0.5             & 2.15 $\pm$ 0.02            & 100
& $39.0 \pm 2.2$ & $18.9 \pm 0.46$ & $0.79 \pm 0.21$ & 100 \\
RSD++ & 16               
& 79.0 $\pm$ 2.4 & 16.3 $\pm$ 0.4 & 1.84 $\pm$ 0.64 & 26.8 $\pm$ 0.3
& 40.0 $\pm$ 2.2            & 23.0 $\pm$ 1.1            & 1.02 $\pm$ 1.98           & 70.3 $\pm$ 1.3 \\
\graycell{\specaligntable} & \graycell{16} 
& \graycell{\textbf{81.0 $\pm$ 1.9}} & \graycell{15.8 $\pm$ 0.06} & \graycell{1.31 $\pm$ 0.04} & \graycell{45.5 $\pm$ 0.8}
& \graycell{\textbf{44.0 $\pm$ 0.8}} & \graycell{22.9 $\pm$ 6.3} & \graycell{0.86 $\pm$ 0.19} & \graycell{78.1 $\pm$ 1.1} \\

\cmidrule(lr){1-10}
\multicolumn{2}{c}{} & \multicolumn{8}{c}{OlympiadBench} \\
\cmidrule(lr){1-10}
\Smodel & \color{gray}4  
& \color{gray}25.0 $\pm$ 1.6 & \color{gray}7.9 $\pm$ 0.2  & \color{gray}0.81 $\pm$ 0.2 & \color{gray}0
& \color{gray}8.0 $\pm$ 1.4 & \color{gray}5.58 $\pm$ 0.12  & \color{gray}0.35 $\pm$ 0.26 & \color{gray}0\\
\Bmodel & 4               
& {39.3 $\pm$ 2.1}   & 13.1 $\pm$ 0.3             & 1.32 $\pm$ 0.3 & 100
& $17.0 \pm 1.2$ & $13.0 \pm 0.05$ & $0.59 \pm 0.18 $ & 100 \\
RSD++ & 4               
& 33.0 $\pm$ 2.9 & 17.3 $\pm$ 0.4 & 1.60 $\pm$ 0.70 & 66.3 $\pm$ 0.8
& 7.3   $\pm$ 1.7         & 13.1 $\pm$ 0.4            & 0.61 $\pm$ 0.69           & 95.2 $\pm$ 0.9 \\
\graycell{\specaligntable} & \graycell{4}  
& \graycell{\textbf{41.3 $\pm$ 0.9}} & \graycell{13.1 $\pm$ 0.3} & \graycell{1.30 $\pm$ 3.4} & \graycell{83.1 $\pm$ 0.4}
& \graycell{\textbf{18.0 $\pm$ 2.8}} & \graycell{14.9 $\pm$ 2.8} & \graycell{0.62 $\pm$ 0.03} & \graycell{98.1 $\pm$ 0.2} \\
\cmidrule(lr){1-10}
\Smodel & \color{gray}8  
& \color{gray}31.0 $\pm$ 0.8 & \color{gray}13.4 $\pm$ 0.3 & \color{gray}1.22 $\pm$ 0.3 & \color{gray}0
& \color{gray}8.0 $\pm$ 3.3 & \color{gray}11.09 $\pm$ 0.49 & \color{gray}0.59 $\pm$ 0.44 & \color{gray}0\\
\Bmodel & 8               
& {41.3 $\pm$ 2.1}   & 17.9 $\pm$ 0.5             & 1.81 $\pm$ 0.5 & 100 
& $18.3 \pm 0.48$ & 18.9 $\pm$ 0.1 & $0.83 \pm 0.14 $ & 100 \\
RSD++ & 8               
& 33.3 $\pm$ 1.7 & 21.7 $\pm$ 1.3 & 1.96 $\pm$ 1.78 & 65.3 $\pm$ 1.2
& 8.0   $\pm$ 1.4         & 18.9 $\pm$ 0.7            & 0.83 $\pm$ 0.56           & 92.9 $\pm$ 1.1 \\
\graycell{\specaligntable} & \graycell{8}  
& \graycell{\textbf{41.7 $\pm$ 1.2}} & \graycell{16.9 $\pm$ 0.5} & \graycell{1.80 $\pm$ 1.5} & \graycell{80.3 $\pm$ 0.4}
& \graycell{\textbf{18.5 $\pm$ 0.5}} & \graycell{19.7 $\pm$ 0.14} & \graycell{0.78 $\pm$ 0.66} & \graycell{94.6 $\pm$ 0.1} \\
\cmidrule(lr){1-10}
\Smodel & \color{gray}16 
& \color{gray}31.3 $\pm$ 3.3 & \color{gray}21.1 $\pm$ 1.1 & \color{gray}2.02 $\pm$ 1.1 & \color{gray}0
& \color{gray}11.3 $\pm$ 1.2 & \color{gray}23.17 $\pm$ 0.73 & \color{gray}1.06 $\pm$ 1.23 & \color{gray}0\\
\Bmodel & 16              
& {43.7 $\pm$ 0.5}   & 25.4 $\pm$ 0.9             & 2.57 $\pm$ 0.9 & 100
& \textbf{22.0} $\pm$ 1.2 & 30.1 $\pm$ 0.8 & $1.20 \pm 5.3$ & 100 \\
RSD++ & 16               
& 33.5 $\pm$ 0.5 & 28.7 $\pm$ 1.3 & 2.58 $\pm$ 13.29 & 61.2 $\pm$ 1.3
& 9.5   $\pm$ 0.5         & 25.8 $\pm$ 0.5            & 1.18 $\pm$ 2.71           & 93.2 $\pm$ 0.8 \\
\graycell{\specaligntable} 
& \graycell{16} 
& \graycell{\textbf{44.0 $\pm$ 2.9}} & \graycell{24.6 $\pm$ 1.0} & \graycell{2.56 $\pm$ 6.2} & \graycell{61.5 $\pm$ 0.3} 
& \graycell{{20.0 $\pm$ 5.7}} & \graycell{31.7 $\pm$ 2.4} & \graycell{1.21 $\pm$ 0.01} & \graycell{91.1 $\pm$ 1.3} \\
\bottomrule
\end{tabular}
}
\label{tab:results_table}
\vspace{-.1in}
\end{table*}

\vspace{-0.75em}
\paragraph{Pareto-frontier of accuracy vs. latency.} 

Varying $\tau$ in \specalign (Line~\ref{line:9}, \Cref{alg:special}) controls the reward threshold for switching to the draft model. Higher values of $\tau$ result in a method which switches to the draft model more quickly, implying lower latencies and lower accuracies, while lower values of $\tau$ results in blocks being generated for longer by the target model (i.e., higher latency and accuracy). Varying $\tau$ results in an accuracy-latency pareto curve. As shown in \Cref{fig:pareto} for MATH500 (using Qwen2.5-1.5B/7B-Instruct), \specalign offers a superior tradeoff, with all baselines (RSD and beam search) falling within the convex hull of its curve.

\begin{figure}[h!]
    \centering
    \includegraphics[width=\linewidth,trim={1.3em 1em 1.05em 0},clip]{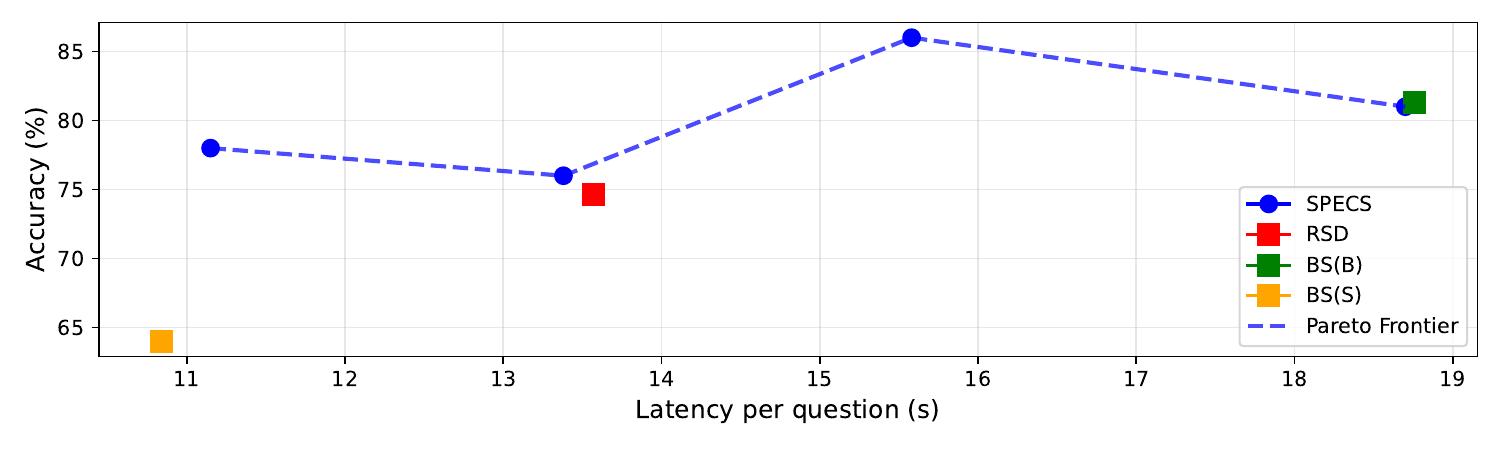}
    \caption{Changing the value of the threshold $\tau$ gives us an accuracy-latency pareto curve for \specalign. As a sanity check, when $\tau \to 1$, \specalign collapses to beam search with the large model, and the corresponding points coincide. }
    \label{fig:pareto}
\end{figure}

\vspace{-1em}
\paragraph{Generalization to Domains Beyond Math.}
Using the Qwen family of models mentioned, we conduct experiments on the GPQA benchmark in \Cref{tab:gpqa_specs}. \specalign achieves better latency compared to beam search with the big model while achieving similar accuracy results, showing the applicability of \specalign to reasoning domains other than mathematics.


{
\begin{table}[h!]
\caption{\specalign results on the GPQA benchmark. Here $n$ denotes the per-step beam width.}
\centering
\resizebox{\linewidth}{!}{
\begin{tabular}{c|cc|cc|cc}
\toprule
$n$ 
& \multicolumn{2}{c}{\textbf{\specalign}} 
& \multicolumn{2}{c}{\textbf{BS (Big Model)}} 
& \multicolumn{2}{c}{\textbf{BS (Small Model)}} \\
\cmidrule(lr){1-7}
& \textbf{Acc} (\%) $\uparrow$ & \textbf{Lat} (s) $\downarrow$
& \textbf{Acc} (\%) $\uparrow$ & \textbf{Lat} (s) $\downarrow$
& \textbf{Acc} (\%) $\uparrow$ & \textbf{Lat} (s) $\downarrow$ \\
\cmidrule(lr){1-7}
4  & 30.8 & 9.00  & 34.2 & 18.6 & 17.8 & 7.77  \\
8  & 32.2 & 21.83 & 32.5 & 23.1 & 22.6 & 10.31 \\
16 & 34.5 & 22.42 & 33.3 & 28.3 & 24.2 & 15.08 \\
\bottomrule
\end{tabular}
}
\label{tab:gpqa_specs}
\end{table}
\arrayrulecolor{black}
\vspace{-10pt}
}

\subsection{Analysis and Ablations}
\label{sec:design_choices}

Here, we present our analysis on the importance of different components of \specalign and its robustness to hyperparameter choices. We defer an analysis of the robustness to noisy reward signals, as well as the latency breakdown and implementation specifics such as the hardware setup and hyperparameter choices to \Cref{sec:further_experiments}.

\paragraph{Effect of log-likelihood based draft selection score.} First, we compare \specalign with a version of the algorithm where we remove dynamic switching, and generate all steps from the draft model with soft verification (in the sense of \cref{eq:score1}). This can alternately be seen as using $\pigen \gets \pidraft$ always. We observe much higher accuracy compared to beam search with the small model (BS(S)) while the latency is not much worse (shown in \Cref{tab:results_table}). This demonstrates the benefit of our score design in \cref{eq:score1}. The results of this ablation are in \Cref{tab:specs_ablation_full}, under ``Only Small Gen''. We also consider a variant of \specalign where we start with the small model first, switching to the larger model if none of the generations of the small model exceed the threshold $\tau$. We show that this variant outperforms RSD++ in \Cref{tab:results_table}, showing the utility of our scoring mechanism over hard best-of-n sampling. This is indicated with ``\specalign ($\pidraft$ start)'' in \Cref{tab:specs_ablation_full}.
We implement one final variant, ``\specalign (no LL)'', which uses no log-likelihood term in the score function (\cref{eq:score1}). This variant performs worse than \specalign, showing the utility of the log-likehood ratio in the score.

\vspace{-1em}
\paragraph{Effect of dynamic switching.} We consider a variant of \specalign where we switch randomly between the draft and target model for generation, in such a way that the average proportion of steps the target model generates matches with the target model usage of \specalign. This results in a method with similar average per-step latencies. We observe that random switching achieves much lower accuracy than dynamic switching based on the reward signal, demonstrating the advantage of our dynamic switching scheme. The results of this ablation are in \Cref{tab:specs_ablation_full}, under ``Random Switch''. Moreover, the fact that \specalign outperforms \specalign ($\pidraft$ start) also shows the advantage of starting by generating from the large model.

\begin{table}[htbp]
\caption{Performance comparison of ablation studies across different datasets and beam search settings using 
 Qwen2.5-Instruct models. `\specalign' is the main method, 'Only Small Gen' generates all beams by the small model, and `Random Switch' switches randomly between small and big model with the same probability as the target model of corresponding \specalign runs,
`\specalign (start with small model)' starts generating with small model and then switches to the target model and `\specalign (no LL)' removes the log-likelihood term from the score of \specalign.
}

\centering
\resizebox{\linewidth}{!}{
\begin{tabular}{l|c|cc|cc|cc}
\toprule
\textbf{Method} & $n$ & \multicolumn{2}{c}{\textbf{AMC23}} & \multicolumn{2}{c}{\textbf{MATH500}} & \multicolumn{2}{c}{\textbf{OlympiadBench}} \\
\cmidrule(lr){1-8}
 & & \textbf{Acc} (\%) $\uparrow$ & \textbf{Lat} (s) & \textbf{Acc} (\%) $\uparrow$ & \textbf{Lat} (s) $\downarrow$ & \textbf{Acc} (\%) $\uparrow$ & \textbf{Lat} (s) \\
\cmidrule(lr){1-8} 
\specalign & 4 & 59.2 & 14.80 & 77.0 & 9.83 & 41.3 & 13.10 \\
Only Small Gen & 4 & 51.2 & 13.68 & 65.0 & 7.45 & 41.1 & 13.00 \\
Random Switch & 4 & 42.2 & 10.86 & 70.0 & 8.59 & 30.1 & 9.43 \\
\specalign ($\pidraft$ start) & 4 & 58.9 & 18.5 & 77.5 & 13.0 & 41.0 & 15.3 \\
\specalign (no LL) & 4 & 43.2 & 12.5 & 71.5 & 8.73 & 32.5 & 16.6 \\
\cmidrule(lr){1-8}
\specalign & 8 & 62.5 & 20.70 & 79.5 & 11.3 & 41.7 & 16.90\\
Only Small Gen & 8 & 56.2 & 19.69 & 66.0 & 8.15 & 39.0 & 16.24 \\
Random Switch & 8 & 50.6 & 15.45 & 70.0 & 12.13 & 33.5 & 13.86 \\
\specalign ($\pidraft$ start) & 8 & 61.4 & 21.9 & 78.9 & 12.7 & 41.3 & 18.1 \\
\specalign (no LL) & 8 & 47.5 & 17.9 & 73.3 & 10.6 & 32.8 & 23.8 \\
\cmidrule(lr){1-8}
\specalign & 16 & 68.3 & 28.80 & 81.0 & 11.748 & 44.0 & 24.60 \\
Only Small Gen & 16 & 63.8 & 26.72 & 73.0 & 11.6 & 39.5 & 23.84 \\
Random Switch & 16 & 54.7 & 23.60 & 74.0 & 11.3 & 38.4 & 21.83 \\
\specalign ($\pidraft$ start)  & 16 & 67.9 & 29.2 & 81.0 & 17.2 & 44.2 & 27.8 \\
\specalign (no LL) & 16 & 48.6 & 28.1 & 74.5 & 14.5 & 33.1 & 27.1 \\
\bottomrule
\end{tabular}
}
\label{tab:specs_ablation_full}
\end{table}

\vspace{-0.5em}
\paragraph{Hyperparameter Sensitivity.}
We also examine whether \specalign is sensitive to hyperparameter tuning. We find that the method is highly stable: setting $\beta = 2L$ (where $L$ is the token budget) and restricting $\tau$ to a coarse set $\{0.7, 0.8, 0.9\}$ recovers nearly all the performance of an exhaustive search. As shown in Table \ref{tab:constraint_hyperparam_selection}, the ``Constrained Search" results differ from the optimal search by only marginal amounts, confirming that \specalign is easy to tune and deploy.

{%

\begin{table}[h]
\caption{Hyperparameter sensitivity analysis. We compare the optimal performance found via exhaustive search against a constrained search with fixed heuristics ($\beta=2L$, coarse $\tau$). The performance drop is negligible, indicating robustness to hyperparameter choices. \specalign (E) means Exhaustive search, \specalign (C) means Constrained search.}
\centering
\resizebox{\linewidth}{!}{
\begin{tabular}{l|c|cc|cc|cc}
\toprule
\textbf{Method} & $n$ & \multicolumn{2}{c}{\textbf{AMC23}} & \multicolumn{2}{c}{\textbf{MATH500}} & \multicolumn{2}{c}{\textbf{OlympiadBench}} \\
\cmidrule(lr){1-8}
& & \textbf{Acc} (\%) $\uparrow$ & \textbf{Lat} (s) $\downarrow$ & \textbf{Acc} (\%) $\uparrow$ & \textbf{Lat} (s) $\downarrow$ & \textbf{Acc} (\%) $\uparrow$ & \textbf{Lat} (s) $\downarrow$ \\
\cmidrule(lr){1-8}
\specalign (E) & 4 & 59.2 & 14.8 & 77.0 & 9.83 & 41.3 & 13.1 \\
\specalign (C) & 4 & 53.1 & 13.3 & 77.0 & 9.80 & 38.5 & 12.5 \\
\midrule
\specalign (E) & 8 & 62.5 & 20.7 & 79.5 & 11.3 & 41.7 & 16.9 \\
\specalign (C) & 8 & 58.4 & 20.2 & 75.3 & 11.0 & 41.0 & 18.1 \\
\midrule
\specalign (E) & 16 & 68.3 & 28.8 & 81.0 & 15.8 & 44.0 & 24.6 \\
\specalign (C) & 16 & 64.8 & 26.4 & 81.0 & 15.8 & 42.6 & 23.8 \\
\bottomrule
\end{tabular}
}
\label{tab:constraint_hyperparam_selection}
\vspace{-1.5em}
\end{table}
\arrayrulecolor{black}
}

\section{Conclusion}

In this work, we address the critical challenge of balancing latency and accuracy in test-time scaling for large language models. We propose \specalign, a latency-aware test-time scaling method for large language models (LLMs) that significantly improves the latency-accuracy tradeoff through speculative drafting with a fast draft model. 
We propose novel integration strategies to merge the draft model, target model, and the PRM, including reward-guided soft verification and a reward-based switching mechanism, to dynamically balance computational resources with response latency. Empirically, we evaluate \specalign across the MATH500, AMC23, OlympiadBench, and GPQA datasets, consistently achieving accuracy comparable to or exceeding  beam search methods while delivering substantial reductions in latency - up to approximately 18\%. Our theoretical analysis confirms that \specalign converges to an optimum of the KL-regularized reinforcement learning objective as the beam width increases.

\bibliography{references}
\bibliographystyle{icml2025}

\newpage
\appendix
\onecolumn

\startcontents[apx]

\section*{Appendix Contents}
\printcontents[apx]{}{1}{\setcounter{tocdepth}{2}}

\section{Related work} \label{sec:related}

\paragraph{Reward-guided search.} Our algorithm is related to the general literature of process reward modeling~\citep{uesato2022solvingmathwordproblems,lightman2023letsverifystepstep,wang2024mathshepherdverifyreinforcellms,snell2025scaling,beeching2024scalingtesttimecompute,zheng2024processbenchidentifyingprocesserrors,zhang2025lessonsdevelopingprocessreward} and reward-guided search~\citep{mudgal2023controlled, beeching2024scalingtesttimecompute,sun2024fastbestofndecodingspeculative,qiu2024treebon,snell2025scaling}. Among these, the closest to our is  reward-guided speculative decoding \citep{liao2025rewardguidedspeculativedecodingefficient}, which also uses a draft model to propose each reasoning step and uses a process reward model to decide on whether to keep the proposal. \specalign~generalizes this by proposing multiple candidate reasoning and combine signals from both the large model and reward model to decide on the candidate reasoning step to keep. Empirically, we observe our method achieves better latency-accuracy tradeoff compared to reward-guided speculative decoding. 

We would like to mention that existing work that utilizes speculative drafts to reduce latency all start with generations from a speculative model, and switches to a larger when the reward signal is low. While our algorithm  starts with the large model and switches to speculative drafts only when the reward signal is high. This approach reduces the time wasted for speculative drafting for low-reward traces, resulting in overall better latency-accuracy tradeoff.  This builds on the novel insight that the accuracy gap between  a small model and a large model is small for these traces, which could be of independent interest.

\paragraph{Tree-based speculative decoding.} Obtaining one sequence from multiple draft sequences has been studied in the speculative decoding literature with the goal of maximizing the likelihood/length of draft tokens accepted while maintaining the same distribution as the large model~\citep{sun2023spectr,miao2024specinfer, cai2024medusa, wang2025opt, chen2024sequoia}. Compared to these, the goal of our soft verification algorithm is to maximize the success rate of downstream applications such as math-reasoning as guided by both the target model and the reward model. Other soft verification methods for speculative decoding \citep{wang2025speculatecollaboratefusingknowledge} consider token-level verification while we focus on step-level candidate selection.

\paragraph{Formal analysis of best-of-$N$ variants.} The theoretical analysis of our soft verification algorithm is related to \citep{verdun2025softbestofnsamplingmodel,huang2025bestofnbestthemcoverage,rohatgi2025tamingimperfectprocessverifiers}, which consider alternative candidate selection methods other than Top-$1$ selection. Our analysis captures the convergence rate in presence of draft models and multiple decoding steps.

\paragraph{Cascading.} Cascading techniques~\citep{dohan2022languagemodelcascades,jitkrittum2023does,yue2024largelanguagemodelcascades,narasimhan2024faster} also utilize multiple language models and decide on the model to used based on certain deferral rule. The switching step in our algorithm can be viewed as imposing a deferral rule based on small, target, and reward models. Additionally, our deferral rule operates at the step-level while previous work usually focuses on individual tokens or the entire generation.

\paragraph{Comparison with concurrent work \cite{geuter2025guidedspeculativeinferenceefficient}.}

The concurrent work of \citet{geuter2025guidedspeculativeinferenceefficient} proposes Guided Speculative Inference (GSI), which also considers soft draft verification by sampling from an exponentially tilted distribution (i.e., soft best-of-$n$). Despite being concurrent, our work provides the following additional novelties compared to \citet{geuter2025guidedspeculativeinferenceefficient}: (1) We propose a novel dynamic switching scheme to improve latency-accuracy tradeoff. Similar to prior work, \citet{geuter2025guidedspeculativeinferenceefficient} starts by generating steps with the draft model, while \specalign starts with the target model and switches to the draft model only for high-reward/easier steps, this has resulted in favorable gains as demonstrated in our experimental section.
(2) Our paper also makes stronger theoretical contributions. In particular, we provide convergence guarantees on the multi-step version of our algorithm, as well as analysis on cases with imperfect reward signals.


We note here that while a newer version of 
\citet{geuter2025guidedspeculativeinferenceefficient} included a comparison of GSI with \specalign, the comparison is slightly inaccurate since they used math-instruct variants of the Qwen models while we used the base instruct models which are not trained for math reasoning. 
In \Cref{tab:specs_gsi}, we include the accuracy of \specalign in the experiment setting of GSI (using Qwen math-instruct models to solve MATH500 and OlympiadBench with $n=4,16$)
along with the reported numbers for GSI in \Cref{tab:specs_gsi}.  We can see that \specalign achieves better accuracy than GSI\footnote{\Cref{tab:specs_gsi} does not report latency numbers, since the work of \citet{geuter2025guidedspeculativeinferenceefficient} only reports the per-step accuracy of GSI averaged across a set of $5$ benchmarks which do not overlap with the benchmarks we evaluate on.}.

\begin{table}[h!]
\caption{Comparison between \specalign (with Qwen2.5-Math-Instruct models) and GSI \cite{geuter2025guidedspeculativeinferenceefficient}}
\centering
\resizebox{0.5\linewidth}{!}{
\begin{tabular}{lc|cc}
\toprule
\textbf{Dataset} & \textbf{Beam Width} ($n$) & \textbf{SPECS} (Acc) & \textbf{GSI} (Acc) \\
\cmidrule(lr){1-4}
MATH500 & 4 & \textbf{82.4}\% & 81.2\% \\
MATH500 & 16 & \textbf{85.7}\% & 82.2\% \\
\midrule
OlympiadBench & 4 & \textbf{41.7}\% & 39.7\% \\
OlympiadBench & 16 & \textbf{44.2}\% & 40.8\% \\
\bottomrule
\end{tabular}
}
\label{tab:specs_gsi}
\end{table}
\section{Theoretical guarantees for \specalign}
\label{app:theory}

In this section, we describe the theoretical guarantees for \specalign. The nature of result we will establish is that, under certain assumptions, as the beam-width increases, the distribution of output of \specalign converges to the optimal policy of the reward maximization problem with KL regularization toward the target model. We first introduce some relevant notation.

\paragraph{Notation.} Denote the space of prompts as $\mathcal{X}$, the space of responses as $\mathcal{Y}$, the draft model $\pidraft : \mathcal{X} \to \Delta_{\mathcal{Y}}$ and the target model $\pitarget : \mathcal{X} \to \Delta_{\mathcal{Y}}$. Prompts and responses are composed of sequences of tokens, and responses are assumed to be complete: ending in a special ``End-Of-Sentence'' ($\texttt{EOS}$) token. We will assume that there exists a target distribution over prompts, denoted $\rho$, and the notation $\mathbb{E}_{\rho,\pi} [\cdot]$ is shorthand for $\mathbb{E}_{X \sim \rho, Y \sim \pi(\cdot|X)} [\cdot]$, where the random variable $Y$ corresponds to full responses drawn from $\pi$ under the prompt $X$. The objective is to find a policy which maximizes the expected reward, while staying close to the target model (as induced by a KL regularization penalty toward $\pitarget$),
\begin{align}
    \mathcal{L}_\beta ( \pi) = \mathbb{E}_{\rho,\pi} [ r(y|x)] - \frac{1}{\beta}\mathbb{E}_{\rho} [ \kl (\pi (\cdot|x) \| \pitarget (\cdot|x))]
\end{align}
For two models $\pi_1$ and $\pi_2$, we use the notation $\kl (\pi_1 \| \pi_2) = \mathbb{E}_{\rho} [ \kl (\pi_1 (\cdot|x) \| \pi_2 (\cdot|x))]$.

Prior to analyzing the algorithm, we begin with a well-known connection between the KL regularized reward maximization objective and the KL divergence between the learnt policy and the optimal aligned policy \cite{korbak2022rl,korbak2022reinforcement,yang2024asymptotics}. We include the proof for the sake of completeness.

\begin{lemma}[Equivalence of KL minimization and KL-regularized reward maximization] \label{lemma:equivalence} For any policy $\pi$,
\begin{align}
    \mathcal{L}_\beta ( \pi^\star_\beta ) - \mathcal{L}_\beta ( \pi) = \kl ( \pi \| \pi^\star_\beta )
\end{align}
where $\pi^\star_\beta$ is the maximizer of $\mathcal{L}_\beta ( \pi)$.
\end{lemma}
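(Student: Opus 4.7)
The plan is to use the closed-form expression for the optimal policy, $\pi^\star_\beta(y|x) \propto \pitarget(y|x) e^{\beta r(y|x)}$, to rewrite the reward as a log-ratio of densities plus a log-partition term, substitute into $\mathcal{L}_\beta(\pi)$, and watch the KL terms collapse. Concretely, I would write $\pi^\star_\beta(y|x) = \pitarget(y|x) e^{\beta r(y|x)} / Z_\beta(x)$ where $Z_\beta(x) = \mathbb{E}_{Y \sim \pitarget(\cdot|x)}[e^{\beta r(Y|x)}]$, and invert this to obtain $r(y|x) = \frac{1}{\beta} \log(\pi^\star_\beta(y|x)/\pitarget(y|x)) + \frac{1}{\beta} \log Z_\beta(x)$.

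Next, I would plug this expression for $r$ into the definition of $\mathcal{L}_\beta(\pi)$. Taking the expectation under $(X,Y) \sim \rho \otimes \pi(\cdot|X)$ and splitting the log-ratio as
\begin{equation*}
\log \frac{\pi^\star_\beta(y|x)}{\pitarget(y|x)} = \log \frac{\pi(y|x)}{\pitarget(y|x)} - \log \frac{\pi(y|x)}{\pi^\star_\beta(y|x)},
\end{equation*}
the reward term becomes $\tfrac{1}{\beta}\mathbb{E}_\rho[\kl(\pi \| \pitarget) - \kl(\pi \| \pi^\star_\beta)] + \tfrac{1}{\beta}\mathbb{E}_\rho[\log Z_\beta(X)]$. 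Adding the $-\tfrac{1}{\beta}\mathbb{E}_\rho[\kl(\pi \| \pitarget)]$ penalty already present in $\mathcal{L}_\beta(\pi)$ cancels one of the KL terms, giving the clean identity
\begin{equation*}
\mathcal{L}_\beta(\pi) = \tfrac{1}{\beta} \mathbb{E}_\rho[\log Z_\beta(X)] - \tfrac{1}{\beta} \mathbb{E}_\rho[\kl(\pi(\cdot|X) \| \pi^\star_\beta(\cdot|X))].
\end{equation*}

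Finally, I would instantiate this identity at $\pi = \pi^\star_\beta$, where the KL term vanishes, so $\mathcal{L}_\beta(\pi^\star_\beta) = \tfrac{1}{\beta}\mathbb{E}_\rho[\log Z_\beta(X)]$. Subtracting gives $\mathcal{L}_\beta(\pi^\star_\beta) - \mathcal{L}_\beta(\pi) = \tfrac{1}{\beta}\mathbb{E}_\rho[\kl(\pi(\cdot|X) \| \pi^\star_\beta(\cdot|X))]$, matching the claim (up to the overall $1/\beta$ factor implicit in the paper's notation for the expected KL). As a free byproduct, since the RHS is non-negative, $\pi^\star_\beta$ is indeed the maximizer, which is consistent with the hypothesis. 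There is no serious obstacle here; the only thing to be careful about is the algebraic bookkeeping when splitting the log-ratio and ensuring that the $\log Z_\beta(x)$ term (which is independent of $\pi$) correctly cancels in the difference.
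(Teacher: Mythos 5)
Your proposal is correct and follows essentially the same route as the paper's proof: both substitute the closed form $\pi^\star_\beta \propto \pitarget e^{\beta r}$ into $\mathcal{L}_\beta(\pi)$, absorb the reward into a log-ratio so that the identity $\mathcal{L}_\beta(\pi) = \tfrac{1}{\beta}\mathbb{E}_\rho[\log Z_\beta(X)] - \tfrac{1}{\beta}\mathbb{E}_\rho[\kl(\pi\|\pi^\star_\beta)]$ emerges, and then evaluate at $\pi = \pi^\star_\beta$. You are also right to flag the residual $1/\beta$ factor: the paper's own derivation produces $\tfrac{1}{\beta}\kl(\pi\|\pi^\star_\beta)$ on the right-hand side even though the lemma statement omits it, so your bookkeeping matches the proof rather than the (slightly loose) statement.
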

\begin{proof}
By \cite{yang2024asymptotics}, the optimal solution to \cref{eq:kl-reg-reward-max}, $\pi^\star_\beta$ takes the structural form,
\begin{align*}
    \pi^\star_\beta (y|x) = \frac{\pitarget (y|x) e^{\beta r(y|x)}}{Z(x)}
\end{align*}
where $Z(x) = \mathbb{E}_{y \sim \pitarget (y|x)} [ e^{\beta r(y|x)}]$.
Thus,
\begin{align*}
    \mathcal{L}_\beta ( \pi) &= \mathbb{E}_{\rho,\pi} [ r(y|x)] - \frac{1}{\beta} \mathbb{E}_{\rho} [ \mathbb{E}_{y \sim \pi (\cdot|x)} [ \log (\pi (y|x) / \pitarget (y|x))]] \\
    &= - \frac{1}{\beta} \mathbb{E}_{\rho} \big[ \mathbb{E}_{y \sim \pi (\cdot|x)} \big[ \log (\pi (y|x) / (\pitarget (y|x)e^{\beta (r(y|x))}) )\big] \big] \\
    &= - \frac{1}{\beta} \mathbb{E}_\rho [\log (Z(x))] - \frac{1}{\beta} \mathbb{E}_{\rho} \big[ \mathbb{E}_{y \sim \pi (\cdot|x)} \big[ \log \big( \pi(y|x) / \pi^\star_\beta (y|x) \big) \big] \big] \\
    &= \mathcal{L}_\beta (\pi^\star_\beta) - \frac{1}{\beta} \mathbb{E}_{\rho} \big[ \kl (\pi (\cdot|x) \| \pi^\star_\beta (\cdot|x)) \big]
\end{align*}
Rearranging completes the proof.
\end{proof}
Therefore, up to an additive error term that does not depend on $\pi$, the loss $\mathcal{L}_\beta (\pi)$ measures the KL divergence between the distribution over responses induced by $\pi$ and $\pi^\star_\beta$ under prompts generated from $\rho$.

The purpose of introducing this result is to motivate the reason for choosing the error measure we study throughout the paper as the \textit{reverse} KL divergence, rather than the forward KL divergence. The former is closely connected with the KL regularized reward maximization objective (it measures the suboptimality gap), while the latter is not so clearly connected. Indeed, these objectives penalize the learnt policy differently: the latter more strongly penalizes policies for placing too high mass on certain responses while the former more strongly penalizes policies which place too low mass on responses.

\subsection{Warm-up: Analysis of \specalign in the Infinite-block length regime} \label{app:warm-up}

In this section we will build up our theoretical understanding of \specalign by considering the infinite block-length setting. In this setting, the drafts generated by the draft model are assumed to be full responses. Here, the algorithm simply selects one candidate among a number of responses using the \subsample subroutine.

\begin{definition}[Infinite block-length regime] \label{def:one-block}
The infinite block-length regime corresponds to taking $\gamma \to \infty$. Operationally, the draft/target model generates full responses and the process reward model collapses to an outcome reward model.
\end{definition}


In this section we will establish a convergence rate of $O(1/n^2)$ for \specalign in the infinite block-length setting. Increasing $n$ scales the amount of (parallel) test-time compute used by the algorithm.

\begin{theorem}[Guarantee for \specalign in the infinite-block length regime] \label{theorem:1-restate}
Suppose $n \ge 3$ and assume that the reward function lies in the range $[0,R]$ pointwise. Then, the policy $\pi_{\specalign}$ returned by \specalign \ in the infinite block-length setting (\Cref{def:one-block}), 
, satisfies,
\begin{align} \label{eq:001}
    \kl (\pi_{\specalign} \| \pi_{\beta}^\star ) \le \widetilde{\mathcal{O}}_n \left( \Cdt^2 \frac{e^{2\beta R}}{n^{3/2}} \right)
\end{align}
where, the ``sequence-level coverage coefficient'' $\Cdt$ is,
\begin{align} \label{eq:Cdt}
    \Cdt = \left\| \frac{\pitarget (\cdot|x)}{\pidraft (\cdot|x)} \right\|_\infty \left\| \frac{\pidraft (\cdot|x)}{\pitarget (\cdot|x)} \right\|_\infty.
\end{align}
\end{theorem}

\begin{remark}
In our analysis we establish a stronger result, where the LHS of Eq.~(\ref{eq:001}) is replaced by $\log(1 + D_{\chi^2} ( \pi_{\specalign} \| \pi^\star_\beta))$.
\end{remark}


The proof of this result follows as a consequence of \Cref{corr:specsampler} which we will introduce later. In the infinite block-length regime, \specalign runs \Cref{alg:subsample} once with the generator model $\pigen \gets \pitarget$. In the next section, we will prove a guarantee on the distribution over responses / steps induced \subsample routine  we consider in \Cref{alg:subsample} as an instantiation of sequential monte carlo (SMC) \cite{rubin1987calculation,naesseth2019elements}, also known by the name particle filtering in the literature \cite{gordon1993novel}. In the next section, we will introduce SMC and prove a distributional closeness result for the resulting policy. In turn, this will imply the proof of \Cref{theorem:1-restate} and provide a starting point for proving \Cref{theorem:specalign-block}.

\subsection{Sequential Monte Carlo: Sampling by approximating the partition function}

Given $N$ responses drawn from the generator model $\pigen(\cdot|x)$, the partition function $Z(x)$ in \cref{eq:opt-redefine} can be approximated by a discrete summation,
\begin{equation*}
   Z(x) = \mathbb{E}_{Y \sim \pigen (\cdot|x)} [e^{S(Y|x)}] \approx \frac{1}{N} \sum_{i=1}^N e^{S(Y_i|x)} = \widehat{Z} (x).
\end{equation*}
SMC uses this approximation to sample from the target distribution, $\pigen (\cdot|x) e^{S(\cdot|x)} / Z(x)$.

\begin{definition}[Sequential Monte Carlo (SMC)]
\label{def:SMC}
Given a prompt $x$, SMC generates a set of draft responses $\Ydraft = \{ Y_i \}_{i=1}^N$ i.i.d. from $\pigen(\cdot|x)$. Then an index $i$ is sampled with probability proportional to $e^{S(Y_i|x)}$ and $Y_i$ is returned. Denoting $Y^\star$ as the output of the procedure,
\begin{align*}
    \Pr (Y^\star = Y_i) &= \frac{e^{S(Y_i|x)}}{Z_{\Ydraft} (x)}, \\
    \text{where, } Z_{\Ydraft} (x) &= \sum_{j \in [n]} e^{S(Y_j|x)}.
\end{align*}
The policy, $\pipartapx{N}$, is the induced distribution over responses, averaged over the randomness of sampling $\Ydraft$.
\end{definition}

 Let us first introduce some notation to make the presentation simpler.

\begin{definition}[Exponentiated score] \label{def:score} For the sake of brevity, define the exponentiated score,
\begin{align*}
    \phi_\beta (y|x) = \frac{\pitarget (y|x)}{\pigen (y|x)} e^{\beta r (y|x)}.
\end{align*}
Furthermore, define $\phi_{\max} = \max_{x \in \mathcal{X},y \in \mathcal{Y}} \phi_\beta (y|x)$ and $\phi_{\min} = \min_{x \in \mathcal{X},y \in \mathcal{Y}} \phi_\beta (y|x)$. Note that $\phi_\beta$ is simply the exponentiated version of the score $S(y|x)$ defined in \cref{eq:score1}.
\end{definition}

To aid our analysis, we will establish guarantees for \specalign which bounds the KL divergence of with the optimal policy, when the number of drafts generated (i.e., the beam-width) is randomized and distributed according to a Poisson random variable with mean $n$. We will establish upper bounds on $\bbE[ \kl (\pipartapx{N} \| \pi_{\beta}^\star ) ]$ where $N \sim \Poi (n)|_{>0}$. Note that since $\Pr (N = n) \gtrsim 1/n$ for $N \sim \Poi (n)|_{>0}$ so, these results can be translated into an upper bound on $\kl (\pipartapx{n} \| \pi_{\beta}^\star )$ for any fixed value of $n$, by multiplying by an additional $\sqrt{n}$ factor.

\begin{theorem}
\label{theorem:soft-bon}
Fix any $n \ge 3$. In the infinite block-length setting, the policy $\pipartapx{N}$ induced by SMC (\Cref{def:SMC}). satisfies,
\begin{align*}
    \bbE_{N \sim \Poi (n)|_{>0}} [ \kl (\pipartapx{N} \| \pi_{\beta}^\star ) ] &\le \mathcal{O} \left( \frac{\log^2(n)}{n^2} \left( \frac{\phi_{\max}}{\phi_{\min}} \right)^2 \right).
\end{align*}
$\phi_{\max}$ and $\phi_{\min}$ are defined in \Cref{def:score}.
\end{theorem}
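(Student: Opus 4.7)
The plan is to derive a clean expression for the density ratio $\pipartapx(y|x)/\pi^\star_\beta(y|x)$ via exchangeability, and then bound the $\chi^2$-divergence through a Taylor expansion combined with concentration of the empirical partition function.

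First, conditional on $K = k$ drafts, let $p_k(y|x)$ denote the output distribution of the partition-approximation rule. Using symmetry of the i.i.d.\ draws, one can write
\begin{align*}
    p_k(y|x) = k\, \pidraft(y|x)\, \phi_\beta(y|x)\, \E_{Y_2,\ldots,Y_k}\!\left[\tfrac{1}{\phi_\beta(y|x) + \sum_{j=2}^k \phi_\beta(Y_j|x)}\right].
\end{align*}
Combined with $\pi^\star_\beta(y|x) = \pidraft(y|x)\phi_\beta(y|x)/Z(x)$, where $Z(x) = \E_{\pidraft}[\phi_\beta]$, this gives $U_k(y) := p_k(y|x)/\pi^\star_\beta(y|x) = k\, Z(x)\, \E[1/(k \widehat{Z}_k(y))]$ with $\widehat{Z}_k(y) = (\phi_\beta(y|x) + \sum_{j=2}^k \phi_\beta(Y_j|x))/k$. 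A symmetrization over indices verifies $\E_{y \sim \pi^\star_\beta}[U_k(y)] = 1$, so that $\chi^2(p_k \| \pi^\star_\beta) = \Var_{\pi^\star_\beta}(U_k)$.

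The main step is to bound this variance at an $O(1/k^2)$ rate; naive Jensen on $U_k - 1 = \E[(Z - \widehat{Z}_k)/\widehat{Z}_k | y]$ yields only the weaker $O(1/k)$ rate. I would apply a second-order Taylor expansion of $1/\widehat{Z}_k$ around $Z(x)$, writing
\begin{align*}
    U_k(y) - 1 = -\frac{\phi_\beta(y|x) - Z(x)}{k\, Z(x)} + R_k(y),
\end{align*}
where $R_k(y) = \E[(\widehat{Z}_k - Z)^2/\widehat{Z}_k | y]/Z$ is the Taylor remainder. The squared leading term contributes $\Var_{\pi^\star_\beta}(\phi_\beta)/(kZ)^2 \le O((\phi_{\max}/\phi_{\min})^2/k^2)$ directly, since $\phi_\beta \in [\phi_{\min},\phi_{\max}]$ and $Z \ge \phi_{\min}$. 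For $R_k$, I would introduce the good event $\mathcal{E} = \{|\widehat{Z}_k - Z| \le C \phi_{\max}\sqrt{\log(k)/k}\}$; Bernstein's inequality gives $\Pr(\mathcal{E}^c) \lesssim 1/k^2$ for a large enough constant $C$. On $\mathcal{E}$, the remainder satisfies $|R_k(y)| = O(\log(k)/k \cdot (\phi_{\max}/\phi_{\min})^2)$ pointwise, while on $\mathcal{E}^c$ the trivial bound $|U_k - 1| \le \phi_{\max}/\phi_{\min}$ combined with the tail probability contributes at most $O((\phi_{\max}/\phi_{\min})^2/k^2)$.

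Assembling these contributions yields $\chi^2(p_k \| \pi^\star_\beta) \le O\!\left(\log^2(k)/k^2 \cdot (\phi_{\max}/\phi_{\min})^2\right)$. To handle the randomization $K \sim \mathrm{Poi}(n)_{>0}$, I would use convexity of $\chi^2$ in its first argument, writing $\pipartapx = \E_K[p_K]$ so that $\chi^2(\pipartapx \| \pi^\star_\beta) \le \E_K[\chi^2(p_K \| \pi^\star_\beta)]$; standard Poisson tail bounds for $n \ge 3$ then yield $\E_K[\log^2(K)/K^2] = O(\log^2(n)/n^2)$. A final application of $\kl(\pipartapx \| \pi^\star_\beta) \le \log(1 + \chi^2(\pipartapx \| \pi^\star_\beta))$ produces the claimed bound.

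The hardest technical step is controlling the Taylor remainder $R_k(y)$ at the correct order: a crude bound like $|U_k - 1| \le (1/Z)\E[|\widehat{Z}_k - Z|/\widehat{Z}_k | y]$ degrades to $O(1/\sqrt{k})$ through Jensen, so the good/bad-event split is essential. Equally delicate is ensuring that the $(\phi_{\max}/\phi_{\min})^2$ coverage factor is not spoiled by the cross-terms between the linear and remainder pieces; the fact that the linear term has (approximately) zero mean under $\pi^\star_\beta$---a consequence of $\E_{\pi^\star_\beta}[U_k] = 1$---is the key cancellation powering the sharpening from $1/k$ to $1/k^2$.
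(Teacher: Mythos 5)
Your route is genuinely different from the paper's: you condition on $K=k$, use exchangeability to write the output density ratio as $U_k(y)=kZ(x)\,\E\bigl[1/(k\widehat{Z}_k(y))\bigr]$, and attack $\chi^2=\Var_{\pi^\star_\beta}(U_k)$ by a Taylor expansion of $1/\widehat{Z}_k$ plus concentration, averaging over $K$ at the end by convexity. The paper instead exploits the Poisson structure directly (independence of the multiplicities $N_y$) to obtain an exact integral representation $\pi_{\beta,n}(y|x)\propto \pidraft(y|x)\phi_\beta(y|x)\int_0^\infty e^{n(\Phi_\beta(s)-1)}e^{-s\phi_\beta(y|x)}\,ds$ via the Laplace transform $\Phi_\beta$, and bounds the resulting double integral for $1+D_{\chi^2}$ by splitting $(s_1,s_2)$-space into regions. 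Your setup through $\E_{\pi^\star_\beta}[U_k]=1$, the identification $\chi^2=\Var(U_k)$, and the first-order term $-(\phi_\beta(y|x)-Z)/(kZ)$ contributing $O\bigl((\phi_{\max}/\phi_{\min})^2/k^2\bigr)$ are all correct, and you correctly identify that the zero-mean cancellation is what upgrades $1/k$ to $1/k^2$.

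The gap is in the assembly of the remainder term, and it costs you the advertised power of the coverage ratio. Writing $U_k-1=L_k+R_k$, the quantity you must control is $\Var(L_k+R_k)$, so $R_k$ enters \emph{squared}: $\Var(R_k)\le\E[R_k^2]$. Your (correct) pointwise bound $|R_k(y)|=O\bigl(\tfrac{\log k}{k}(\phi_{\max}/\phi_{\min})^2\bigr)$ therefore yields $\E[R_k^2]=O\bigl(\tfrac{\log^2 k}{k^2}(\phi_{\max}/\phi_{\min})^4\bigr)$, and the cross term $\E[L_kR_k]$ similarly contributes a third power; the step ``assembling these contributions yields $\chi^2\le O(\log^2(k)/k^2\cdot(\phi_{\max}/\phi_{\min})^2)$'' does not follow. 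As written, your argument proves the theorem only with $(\phi_{\max}/\phi_{\min})^4$ in place of $(\phi_{\max}/\phi_{\min})^2$. Repairing this within your framework requires showing that $R_k(y)$ is nearly constant in $y$ (its $y$-dependence enters only through the single summand $\phi_\beta(y|x)/k$ of $\widehat{Z}_k(y)$), so that $\Var(R_k)\ll\max_y R_k(y)^2$; even then the bookkeeping only recovers the second power once $n\gtrsim(\phi_{\max}/\phi_{\min})^2$. The paper sidesteps this entirely by working multiplicatively: the dominant region of its double integral is bounded by a \emph{ratio} estimate $\frac{(-\Phi_\beta'(s_1+s_2))(-\Phi_\beta'(0))}{(-\Phi_\beta'(s_1))(-\Phi_\beta'(s_2))}\le\exp\bigl(p_n^2(\phi_{\max}/\phi_{\min}-1)^2\bigr)$ with $p_n=\log(n^3/4)/n$ — a second-order Taylor bound on $\log(-\Phi_\beta')$ integrated against $d\Phi_\beta\otimes d\Phi_\beta$ — so the coverage factor sits inside an exponential of $1+D_{\chi^2}$ and is extracted \emph{linearly} (in its square) when passing to $\kl\le\log(1+D_{\chi^2})$. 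You would need an analogous multiplicative or log-scale control of the second-order error to reach the stated $(\phi_{\max}/\phi_{\min})^2$.
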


This result presents a guarantee for SMC where the number of beams generated is drawn from a Poisson distribution. Poisson random variables are tightly concentrated around their mean and the above result implies as a corollary that for any $n \ge 3$,
\begin{align*}
    \min_{N \le n} \kl (\pipartapx{N} \| \pi_{\beta}^\star ) &\le \mathcal{O} \left( \frac{\log^2(n)}{n^2} \left( \frac{\phi_{\max}}{\phi_{\min}} \right)^2 \right).
\end{align*}
Similarly, as mentioned previously, note that for $N \sim \Poi (n)|_{>0}$, $\Pr (N = n) \gtrsim 1/\sqrt{n}$. Therefore, by an application of Markov's inequality, we also arrive at the following bound for any fixed value of $n \ge 3$,
\begin{align}
    \kl (\pipartapx{n} \| \pi_{\beta}^\star ) &\le \mathcal{O} \left( \frac{\log^2(n)}{n^{3/2}} \left( \frac{\phi_{\max}}{\phi_{\min}} \right)^2 \right).
\end{align}
This establishes an upper bound for SMC which applies for any fixed beam width.

Since it is more involved, we defer the proof of \Cref{theorem:soft-bon} to \Cref{sec:proof-sampling-partition}. We have the following two corollaries of \Cref{theorem:soft-bon} by instantiating $\pigen$ differently.

\begin{corollary}[Target sampling] \label{corr:targetsampler}
Suppose $n \ge 3$ and that drafts are generated from $\pitarget$ (i.e., we choose $\pigen \gets \pitarget$). Assuming rewards lie in the range $[0,R]$, in the infinite block-length setting (\Cref{def:one-block}),
\begin{align*}
    \bbE_{N \sim \Poi (n)|_{>0}} [ \kl (\pipartapx{N} \| \pi_{\beta}^\star ) ] \le \widetilde{\mathcal{O}}_n \left( \frac{e^{2\beta R}}{n^2} \right)
\end{align*}
By an application of Markov's inequality, we have that for any fixed $n \ge 3$,
\begin{align*}
    \kl (\pipartapx{n} \| \pi_{\beta}^\star ) \le \widetilde{\mathcal{O}}_n \left( \frac{e^{2\beta R}}{n^2} \right)
\end{align*}
\end{corollary}

Note that \specalign in the infinite-block length regime is precisely the same as $\pipartapx{N}$ where the generator $\pigen \gets \pitarget$. This implies the statement of \Cref{theorem:1-restate}.

\begin{corollary}[Speculative sampling] \label{corr:specsampler}
Suppose $n \ge 3$ and that drafts are generated from $\pidraft$ (i.e., we choose $\pigen \gets \pidraft$). Then, in the infinite block-length setting (\Cref{def:one-block}), assuming rewards lie in the range $[0,R]$,
\begin{align*}
    \bbE_{N \sim \Poi(n)|_{>0}} [ \kl (\pipartapx{N} \| \pi_{\beta}^\star )] \le \widetilde{\mathcal{O}}_n \left( \Cdt^2 \frac{e^{2\beta R}}{n^2} \right)
\end{align*}
where $\Cdt = \| \pitarget (\cdot|x) / \pidraft (\cdot|x) \|_\infty \| \pidraft (\cdot|x) / \pitarget (\cdot|x) \|_\infty$, as defined in \Cref{eq:Cdt}. By an application of Markov's inequality, we have that for any fixed $n \ge 3$,
\begin{align*}
    \kl (\pipartapx{n} \| \pi_{\beta}^\star ) \le \widetilde{\mathcal{O}}_n \left( \Cdt^2 \frac{e^{2\beta R}}{n^{3/2}} \right)
\end{align*}
\end{corollary}

\begin{remark} \label{remark:1}
In all three prior results, \Cref{theorem:soft-bon,corr:targetsampler,corr:specsampler}, the proof will argue the stronger upper bound which replaces the KL divergence in the LHS by the strictly larger quantity, $\log (1 + D_{\chi^2} (\pipartapx{N} \| \pi_{\beta}^\star ))$.
\end{remark}

When the target and draft models are identical, then $\Cdt = 1$. We also show that the rate of decay in \Cref{corr:specsampler} is essentially tight when the target and draft models are identical.

\begin{theorem} \label{theorem:lb-softbon} There exists a target model $\pitarget$ and reward model over a $2$-ary response space $\mathcal{Y} = \{ y_0,y_1 \}$, with rewards lying in the range $[0,R]$, such that, the policy $\pipartapx{N}$ returned by sampling by SMC (\Cref{def:SMC}) in the infinite block-length setting (\Cref{def:one-block}) with $\pigen \gets \pitarget$,
\begin{align*}
    \bbE_{N \sim \Poi (n)|_{>0}} [\kl (\pipartapx{N} \| \pi_{\beta}^\star )] \ge \Omega_n \left( \frac{e^{2 \beta R}}{n^2} \right),
\end{align*}
assuming that $n \ge e^{\beta R}$.
\end{theorem}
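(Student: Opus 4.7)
The plan is to exhibit an explicit binary example in which sampling by partition function approximation has a $\Theta(e^{\beta R}/n)$ pointwise bias, which by a standard Bernoulli reverse-KL inequality converts into the claimed $\Omega(e^{2\beta R}/n^2)$ lower bound. On $\mathcal{Y} = \{y_0, y_1\}$, set $r(y_0) = R$, $r(y_1) = 0$, and $\pitarget(y_0) = p = c \cdot e^{-\beta R}$ for a small absolute constant $c > 0$. The assumption $n \ge e^{\beta R}$ ensures $np \ge c$. A direct computation gives $\pi^\star_\beta(y_0) = p e^{\beta R}/(p e^{\beta R} + 1 - p) = c/(c + 1 - p)$, a quantity bounded away from $0$ and $1$ uniformly in $\beta R$.

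Since $\pidraft = \pitarget$, the score reduces to $S(y|x) = \beta r(y|x)$. By the Poisson splitting property, the draft counts $(N_0, N_1)$ of the two types among the $K \sim \mathrm{Poi}(n)$ drafts are independent with $N_0 \sim \mathrm{Poi}(np)$ and $N_1 \sim \mathrm{Poi}(n(1-p))$ (conditioning on $K > 0$ perturbs total variation by only $e^{-n}$). Writing $M = N_0 e^{\beta R}$ and $L = N_1$, the output probability becomes
\begin{align*}
\pipartapx(y_0) = \mathbb{E}\!\left[\frac{M}{M + L}\;\middle|\;M + L > 0\right].
\end{align*}
I would Taylor-expand $g(M, L) = M/(M+L)$ around the means $\bar M = np\,e^{\beta R} = cn$ and $\bar L = n(1-p)$: the zeroth-order term equals $\pi^\star_\beta(y_0)$ exactly, the linear term has zero expectation, and the quadratic correction (using $\mathrm{Cov}(M,L) = 0$) evaluates to
\begin{align*}
\frac{-\bar L \,\mathrm{Var}(M) + \bar M\,\mathrm{Var}(L)}{(\bar M + \bar L)^3} = \frac{c(1-p)(1 - e^{\beta R})}{n(c+1-p)^3},
\end{align*}
where $\mathrm{Var}(M) = e^{2\beta R} np = cn e^{\beta R}$ and $\mathrm{Var}(L) = n(1-p)$. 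For large $\beta R$ this is $-\Theta(e^{\beta R}/n)$, so $|\pipartapx(y_0) - \pi^\star_\beta(y_0)| = \Theta(e^{\beta R}/n)$. Since both Bernoulli parameters lie in $[\delta, 1 - \delta]$ for an absolute $\delta > 0$, the inequality $\kl(q_1 \| q_2) \ge (q_1 - q_2)^2 / (2 \max(q_2, 1 - q_2))$ yields $\kl(\pipartapx \| \pi^\star_\beta) = \Omega(e^{2\beta R}/n^2)$.

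The main obstacle is controlling the Taylor remainder rigorously, since the higher derivatives of $g$ blow up near $M + L = 0$. I would split the expectation on the concentration event $\mathcal{E} = \{|N_0 - np| \le np/2 \text{ and } |N_1 - n(1-p)| \le n(1-p)/2\}$ and its complement. On $\mathcal{E}^c$, Chernoff bounds for Poisson give $\Pr(\mathcal{E}^c) \le e^{-\Omega(np)} + e^{-\Omega(n(1-p))}$, which combined with $g \in [0,1]$ is negligible compared to the leading bias when $c$ is chosen sufficiently large. On $\mathcal{E}$, $|g'''| = O(1/n^3)$ and the Poisson third absolute central moment bound yields $\mathbb{E}[|M - \bar M|^3 \mathbb{I}_\mathcal{E}] = O(e^{3\beta R/2} n^{3/2})$, so the cubic remainder is $O(e^{3\beta R/2}/n^{3/2})$, which is dominated by the leading $\Theta(e^{\beta R}/n)$ bias provided $n \ge C \cdot e^{\beta R}$ for a sufficiently large absolute constant $C$. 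For the complementary regime $e^{\beta R} \le n \le C e^{\beta R}$, the target bound $\Omega(e^{2\beta R}/n^2) = \Omega(1)$ is a constant, and can be obtained by directly verifying that $\pipartapx(y_0)$ differs from $\pi^\star_\beta(y_0) = \Theta(1)$ by a constant using the explicit Poisson probabilities $P(N_0 = k) = e^{-np}(np)^k/k!$ with $np = \Theta(1)$.
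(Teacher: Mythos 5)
Your construction and overall strategy are sound, but the route is genuinely different from the paper's. The paper works with essentially the same two-point instance (high reward on the low-prior response, calibrated so that $\pi^\star_\beta$ is uniform) and then computes $\pipartapx(y_0)$ \emph{exactly} via the Laplace-transform integral representation of \Cref{lemma:integral}, $\pi_{\beta,n}(y_0) \propto \int_0^\infty e^{n(\Phi_\beta(s)-1)}e^{-s}\,\dd s$. A change of variables $h = g(t) = (\theta t + t^\theta)/(1+\theta)$ and monotonicity of $g$, $g^{-1}$, $g'$ yield the clean pointwise bias $\pi_{\beta,n}(y_0) - \tfrac12 \ge \theta/50n$ uniformly over all $n \ge \theta \ge 2$, after which Pinsker finishes. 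You instead exploit Poisson splitting to write $\pipartapx(y_0) = \E[M/(M+L)]$ with independent Poisson counts and extract the $-\Theta(e^{\beta R}/n)$ bias from the second-order Taylor term $(-\bar L\,\Var(M) + \bar M\,\Var(L))/(\bar M+\bar L)^3$; this makes the \emph{source} of the bias more transparent (it is a Jensen gap for the strictly concave map $x \mapsto x/(x+a)$), at the cost of having to control remainders and tails by hand. Your moment and tail estimates are correct: $\Var(M) = cne^{\beta R}$, the cubic remainder is $O(e^{3\beta R/2}/n^{3/2})$, and choosing the absolute constant $c$ large does make $\Pr(\mathcal{E}^c) = e^{-\Omega(c\,n e^{-\beta R})}$ negligible against $e^{\beta R}/n$ uniformly over $n \ge e^{\beta R}$. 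Note that your instance additionally needs $e^{\beta R} \ge c$ so that $p \le 1$; this is harmless and parallels the paper's implicit assumption $\theta \ge 2$.

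The one step you have not actually supplied is the boundary regime $e^{\beta R} \le n \le C e^{\beta R}$, where the Taylor remainder is \emph{not} dominated by the leading term and you assert a constant gap "by direct verification." This does need an argument, and the naive one fails: bounding $\pipartapx(y_0) \le \Pr(N_0 \ge 1) = 1 - e^{-cm}$ and comparing to $\pi^\star_\beta(y_0) = c/(c+1-p)$ gives a gap of $e^{-cm} - (1-p)/(c+1-p)$, which is \emph{negative} for large $c$. What does work is a quantitative Jensen-gap argument: after concentrating $N_1$ around $n(1-p)$, one compares $\E[N_0/(N_0+m')]$ for $N_0 \sim \mathrm{Poi}(cm)$ against $cm/(cm+m')$ with $m' = \Theta(m)$, and the strict concavity of $x \mapsto x/(x+m')$ together with the non-degeneracy of a $\mathrm{Poi}(\lambda)$ variable with $\lambda \in [c, cC]$ yields a gap bounded below by a positive constant on this compact parameter range. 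This is fixable but is a genuine missing piece as written; the paper's integral representation avoids the issue entirely by producing a single bound valid for all $n \ge \theta$.
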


\subsection{Finite-block length: Analysis of \specalign with idealized process feedback}

In this section we extend the previous discussion to the finite-block setting. In this case, \specalign \ (\Cref{alg:special}) is iterative and  the reward model considered in \subsample (\Cref{alg:subsample}) must be implemented with a process reward model (PRM) which is trained to provide block-level feedback. Furthemore, control of draft generation is shared between the draft and target models additionally through dynamic switching.

In the infinite block-length setting, the guarantee of \specalign depended on a strong notion of coverage, $\Cdt = \| \pitarget (\cdot|x) / \pidraft (\cdot|x) \|_\infty \| \pidraft (\cdot|x) / \pitarget (\cdot|x) \|_\infty$, where the distributions are over full responses, a quantity which may scale exponentially with $H$. In the finite block-length setting, our main result, \Cref{theorem:specalign-block}  shows that with access to a PRM in the sense of \Cref{def:prm} allows guarantees that scale polynomially with the horizon $H$, where the dependency on $\Cdt$ is relaxed to the maximum ``per-block'' ratio of densities between the target and draft models.




We will reiterate the definitions of the process reward model we consider. In practice this model is trained and inexact, but for the purpose of a clean theoretical discussion we avoid this challenge and discuss the case where the process reward is exact. The following two definitions instantiate the PRM.

\begin{definition}[Optimal KL-regularized value function: restatement of \Cref{def:value}] \label{def:value-repeat}
For a policy $\pi$, and any prompt $x \in \mathcal{X}$ and partial trace $y_{\le t}$, the optimal KL-regularized value of a policy is defined as,
\begin{align*}
    V^{\pi}_\beta (x, y_{\le t}) = \frac{1}{\beta} \log \left( \mathbb{E}_{Y \sim \pi ( \cdot|x,y_{\le t})} \left[ e^{\beta r (x, Y)} \right] \right)
\end{align*}
where $Y$ is a full completion of the partial response $y_{\le t}$. Note that for any full response $y \in \mathcal{Y}$, $V^{\pi}_\beta (x, y) = r(x,y)$.
\end{definition}

\begin{definition}[Idealized Process Reward Model: restatement of \Cref{def:prm}] \label{def:prm-repeat} Let $\rprm^\pi$ denote the PRM corresponding to the optimal KL-regularized advantage function with respect to some policy $\pi$. Formally, given any prefix of tokens / blocks $y_{\le t-1}$ of length $t-1$ and a subsequent token / block $y_t$,
\begin{align*}
	\rprm^\pi (y_t |x, y_{\le t-1}) &= V^{\pi}_\beta (x, y_{\le t}) - V^{\pi}_\beta (x, y_{\le t-1})
\end{align*}
where, the KL-regularized value function $V^{\pi}_\beta$ is defined in \Cref{def:value}. The cumulative advantage till block $t$ is denoted, $\rprm^\pi (y_{\le t} |x) = \sum_{t'=1}^t \rprm^\pi (y_{t'} |x, y_{\le t'-1})$. The idealized PRM we consider is $\rprmtarget = \rprm^\pi$ for $\pi=\pitarget$.
\end{definition}

\medskip
Our first result will be to establish that the optimal aligned model can be decomposed into a sequence of models corresponding to solving ``one-block'' regularized reward maximization problems over the reward function induced by $\rprmtarget$.

\begin{lemma} \label{lemma:product}
The optimal aligned model over full responses can be written as a product of conditional per-block models,
\begin{align} \label{eq:decomp}
    \pi^\star_\beta ( y_1,\cdots,y_t | x ) = \prod_{t=1}^H \pi^\star_\beta (y_t | x, y_{\le t-1} ),
\end{align}
where, we define,
\begin{align*}
\pi^\star_\beta (y_t | x, y_{\le t-1} ) \propto \pitarget(y_t | x, y_{\le t-1}) e^{\beta [ \rprmtarget (y_t | x, y_{\le t-1})]}
\end{align*}
Is the solution to a one-block KL-regularized reward maximization problem. Namely, for every prompt $x$ and partial trace $y_{\le t-1}$, $\pi^\star_\beta (\cdot | x,y_{\le t-1})$ is supported over one-block completions and is the solution to the problem,
\begin{align*}
    \min \left\{ \mathbb{E}_{Y_t \sim \pi (\cdot | x, y_{\le t-1})} \left[ \rprmtarget (y_t|x,y_{t-1}) \right] - \frac{1}{\beta} \kl (\pi(\cdot | x,y_{\le t-1}), \pitarget (\cdot|x,y_{\le t-1}))  \right\}
\end{align*}
where $\pi : \mathcal{X} \times \mathcal{Y}_{\le t-1} \to \Delta_{\mathcal{Y}_t}$.
\end{lemma}
\begin{proof}
By definition, the optimal aligned policy can be written down as,
\begin{align*}
\pi^\star_\beta (y|x)
&= \frac{\pitarget(y|x) e^{\beta r(y|x)}}{\mathbb{E}_{Y \sim \pitarget(\cdot|x)} [e^{\beta r(Y|x)}]} \\
&= \pitarget(y|x) e^{\beta [ V^{\pitarget}_\beta (x, y) - V^{\pitarget}_\beta (x,\emptyset)] } \\
&= \pitarget(y|x) \prod_{t'=1}^H e^{\beta [ V^{\pitarget}_\beta (x, y_{\le t'}) - V^{\pitarget}_\beta (x,y_{\le t'-1})] } \\
&= \prod_{t=1}^H \frac{\pitarget(y_t | x, y_{\le t-1}) e^{\beta [ V^{\pitarget}_\beta (x, y_{\le t})}}{\mathbb{E}_{Y_t \sim \pitarget (\cdot|x,y_{\le t-1})} [ e^{\beta V^{\pitarget}_\beta (x, [y_{\le t-1},Y_t] )} ] } \\
&= \prod_{t=1}^H \frac{\pitarget(y_t | x, y_{\le t-1}) e^{\beta [ \rprmtarget (y_t | x, y_{\le t-1})}}{\mathbb{E}_{Y_t \sim \pitarget (\cdot|x,y_{\le t-1})} [ e^{\beta \rprmtarget (Y_t | x, y_{\le t-1} )} ] }.
\end{align*}
\end{proof}

Next, we show that policies which can guarantee that they realize a one-block distribution which is close to $\pi^\star_\beta (\cdot|x,y_{\le t-1})$ for every $t$ correspond to policies which is close to $\pi^\star_\beta$ over full responses in the aggregate.

\begin{lemma}[Local-to-global guarantee] \label{lemma:cascade}
Consider a policy with next-block distribution given by $\pi (\cdot | x, y_{t-1})$ which satisfies the following guarantee: for every prompt $x$ and prefix $y_{t-1}$,
\begin{align*}
D_{\chi^2} ( \pi (\cdot | x, y_{\le t-1} ) , \pi^\star_\beta (\cdot | x, y_{\le t-1} ) ) \le \varepsilon.
\end{align*}
where the induced distributions $\pi(\cdot|x,y_{\le t-1})$ and $\pi^\star_\beta (\cdot|x,y_{\le t-1})$ are one-block completions of the partial response $(x,y_{\le t-1})$. Then, we have that,
\begin{align*}
1 + D_{\chi^2} ( \pi , \pi^\star_\beta ) \le (1 + \varepsilon)^H
\end{align*}
\end{lemma}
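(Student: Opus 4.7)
} The plan is to exploit the fact that the chi-squared divergence between two autoregressive policies tensorizes along the sequence dimension, in the same way it does for product distributions, provided the per-block chi-squared is uniformly bounded across prefixes. Concretely, I will start from the identity
\begin{equation*}
    1 + D_{\chi^2}(\pi \| \pi^\star_\beta) \;=\; \mathbb{E}_{Y \sim \pi^\star_\beta(\cdot|x)}\!\left[\left(\frac{\pi(Y|x)}{\pi^\star_\beta(Y|x)}\right)^{\!2}\right],
\end{equation*}
and then use the chain rule for both policies, $\pi(y|x) = \prod_{t=1}^{H} \pi(y_t | x, y_{\le t-1})$ and $\pi^\star_\beta(y|x) = \prod_{t=1}^{H} \pi^\star_\beta(y_t | x, y_{\le t-1})$ (the latter from \Cref{lemma:product}), to write the integrand as a product of per-block likelihood-ratio squares.

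The core step is then a telescoping tower-property argument. Define
\begin{equation*}
    M_t(y_{\le t}) \;=\; \prod_{s=1}^{t} \left(\frac{\pi(y_s | x, y_{\le s-1})}{\pi^\star_\beta(y_s | x, y_{\le s-1})}\right)^{\!2}.
\end{equation*}
I will peel off the innermost conditional expectation: conditioned on $y_{\le t-1}$,
\begin{equation*}
    \mathbb{E}_{Y_t \sim \pi^\star_\beta(\cdot|x,y_{\le t-1})}\!\left[\left(\frac{\pi(Y_t | x, y_{\le t-1})}{\pi^\star_\beta(Y_t | x, y_{\le t-1})}\right)^{\!2}\right] \;=\; 1 + D_{\chi^2}\!\big(\pi(\cdot|x,y_{\le t-1}) \,\big\|\, \pi^\star_\beta(\cdot|x,y_{\le t-1})\big) \;\le\; 1+\varepsilon,
\end{equation*}
by the hypothesis. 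Iterating (induction on $t$ from $H$ down to $1$) gives $\mathbb{E}_{Y \sim \pi^\star_\beta}[M_H(Y)] \le (1+\varepsilon)^H$, which is exactly the claimed bound on $1 + D_{\chi^2}(\pi \| \pi^\star_\beta)$.

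I do not expect a serious obstacle here, beyond being careful with two points. First, the per-block conditional bound in the hypothesis is stated for every prefix $y_{\le t-1}$, which is precisely what is needed for the pointwise application inside the tower property. Second, the factorization of $\pi^\star_\beta$ across blocks is not automatic for a policy over full responses but is supplied by \Cref{lemma:product}, so I will cite it explicitly when decomposing $\pi^\star_\beta(y|x)$. With those in hand, the argument is essentially the standard tensorization identity $1 + \chi^2(\prod P_t \| \prod Q_t) = \prod_t (1 + \chi^2(P_t \| Q_t))$ adapted to the autoregressive setting via iterated conditioning, with the supremum over prefixes used to convert the per-prefix bound into a uniform one.
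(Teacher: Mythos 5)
Your proposal is correct and follows essentially the same route as the paper's proof: both expand $1 + D_{\chi^2}(\pi \,\|\, \pi^\star_\beta)$ as $\mathbb{E}_{Y \sim \pi^\star_\beta}\bigl[\bigl(\pi(Y|x)/\pi^\star_\beta(Y|x)\bigr)^2\bigr]$, factor the likelihood ratio over blocks via the chain rule, and peel off one conditional expectation at a time using the per-prefix hypothesis to gain a factor of $(1+\varepsilon)$ per block. The only cosmetic difference is that you phrase the recursion as an explicit induction on the partial products $M_t$, whereas the paper peels off the last block and says ``recurse''; the substance is identical.
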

\begin{proof}
By definition
\begin{align*}
&1 + D_{\chi^2} ( \pi \| \pi^\star_\beta ) \\
&= \mathbb{E}_\rho \left[ \mathbb{E}_{Y \sim \pi^\star_\beta (\cdot|x)} \left[ \left( \frac{\pi (Y|x)}{\pi^\star_\beta (Y|x)} \right)^2 \right] \right] \\
&= \mathbb{E}_\rho \left[ \mathbb{E}_{Y \sim \pi^\star_\beta (\cdot|x)} \left[ \prod_{t=1}^H \left( \frac{\pi (Y_t|x,Y_{\le t-1})}{\pi^\star_\beta (Y_t|x, Y_{\le t-1})} \right)^2 \right] \right] \\
&= \mathbb{E}_\rho \left[ \mathbb{E}_{Y_{\le H-1} \sim \pi^\star_\beta (\cdot|x)} \left[ [1 + D_{\chi^2} ( \pi (\cdot |x,Y_{\le H-1}) \| \pi^\star_\beta (\cdot |x, Y_{\le H-1}) ) ] \cdot \prod_{t=1}^{H-1} \left( \frac{\pi (Y_t|x,Y_{\le t-1})}{\pi^\star_\beta (Y_t|x, Y_{\le t-1})} \right)^2 \right] \right] \\
&\overset{(a)}{\le} (1+\varepsilon) \mathbb{E}_\rho \left[ \mathbb{E}_{Y_{\le H-1} \sim \pi^\star_\beta (\cdot|x)} \left[ \prod_{t=1}^{H-1} \left( \frac{\pi (Y_t|x,Y_{\le t-1})}{\pi^\star_\beta (Y_t|x, Y_{\le t-1})} \right)^2 \right] \right],
\end{align*}
where $(a)$ follows from the local assumption on $\pi$. By recursing on the RHS, we arrive at the statement of the lemma.
\end{proof}

Combining this recursion with our guarantees for \specalign in the infinite block-length setting, we prove the main result \Cref{theorem:specalign-block} next.

\paragraph{Proof of \Cref{theorem:specalign-block}.}

Consider some partial trace $y_{\le t-1}$ and let us consider the process of generating the $t^{\text{th}}$ block. Let $Z_t \in \{ 0,1 \}$ denote the random variable which is a measurable function of $y_{\le t-1}$ which determines whether $\pidraft$ generates the drafts or $\pitarget$ generates drafts in iteration $t$.

By the observation in \Cref{remark:1} for the infinite block-length instantiation of the \specalign \ policy $\pi_{\specalign}$, the following local guarantee is also true when instantiated with $r = \rprmtarget$ and used to generate a single block: given any prompt $x$ and partial trace $y_{\le t-1}$,
\begin{align*}
    D_{\chi^2} ( \pipartapx{n} (\cdot|x,y_{\le t-1}) |_{\{Z_t = 0\}} \| \pi_{\beta}^\star (\cdot|x,y_{\le t-1})) &\le \exp \left( \widetilde{\mathcal{O}}_n \left( \frac{e^{2 \beta R}}{n^{3/2}} \right) \right) - 1 \\
    D_{\chi^2} ( \pipartapx{n} (\cdot|x,y_{\le t-1}) |_{\{Z_t = 1\}} \| \pi_{\beta}^\star (\cdot|x,y_{\le t-1})) &\le \exp \left( \widetilde{\mathcal{O}}_n \left( \Cdtstep^2 \frac{e^{2 \beta R}}{n^{3/2}} \right) \right) - 1
\end{align*}
Note in particular that the second guarantee only depends on $\Cdtstep$ by virtue of the fact that we only generate a single block and not entire responses ($\pi_{\specalign} (\cdot|x,y_{t-1})$ is a distribution over single blocks). By the convexity of the $\chi^2$-divergence, we have that $\pi_{\specalign} (\cdot|x,y_{\le t-1})$, which is a mixture between $\pipartapx{n} (\cdot|x,y_{\le t-1})$ with $\pigen \gets \pidraft$ (i.e., $Z_t=0$) and $\pipartapx{n} (\cdot|x,y_{\le t-1})$ with $\pigen \gets \pitarget$ (i.e., $Z_t=1$) satisfies,
\begin{align*}
    D_{\chi^2} ( \pi_{\specalign} (\cdot|x,y_{\le t-1}) \| \pi_{\beta}^\star (\cdot|x,y_{\le t-1})) &\le \exp \left( \widetilde{\mathcal{O}}_n \left( \Cdtstep^2 \frac{e^{2 \beta R}}{n^{3/2}} \right) \right) - 1
\end{align*}
The proof finishes by combining these guarantees across $t$ via \Cref{lemma:cascade} to give the inequality,
\begin{align*}
    1 + D_{\chi^2} ( \pi_{\specalign} \| \pi_{\beta}^\star) \le \exp \left( \widetilde{\mathcal{O}}_n \left( H \cdot \Cdtstep^2 \frac{e^{2 \beta R}}{n^{3/2}} \right) \right)
\end{align*}
Using the inequality, $\kl ( \pi_{\specalign} \| \pi_{\beta}^\star) \le \log (1 + D_{\chi^2} ( \pi_{\specalign} \| \pi_{\beta}^\star))$ completes the proof.

\section{Robustness of \specalign to misspecified PRMs} \label{sec:apx-prm}

Our guarantees for \specalign in the previous section hinge on the assumption that the PRM is idealized, and equal to the optimal KL-regularized advantage function. However, PRMs in practice can often be noisy. In this section, we consider a model of error which still enables benefiting from process feedback at test-time. We weaken the assumption on the PRM by instead assuming that it satisfies a pointwise error guarantee with respect to the optimal KL-regularized advantage function (i.e., the idealized PRM). 

\begin{assumption} \label{ass:prm-apx}
Recall the definition of $\rprmtarget$ as defined in \Cref{def:prm}. Suppose the learner's PRM $\rprm$ satisfies the conditioned,
\begin{equation*}
    \| \rprm - \rprmtarget \|_\infty \le \varepsilon_{\texttt{PRM}}.
\end{equation*}
where $\varepsilon_{\texttt{PRM}} > 0$ is a misspecification parameter.
\end{assumption}

When \specalign is implemented with this misspecified PRM, we show that the KL divergence of the resulting policy to the optimal policy $\pi_\beta^\star$ degrades gracefully with $\varepsilon_{\texttt{PRM}}$.

\begin{theorem} \label{theorem:prm-apx}
Assume $n \ge 3$ and suppose \specalign \ is implemented with a PRM which satisfies \Cref{ass:prm-apx}, and with beam-width $n$. Then, for reasoning problems over $H$ blocks, in the finite-block length setting, the policy $\pi_{\specalign}$ returned by \specalign \ satisfies,
\begin{align*}
    \kl (\pi_{\specalign} , \pi^\star_\beta) \le \widetilde{\mathcal{O}}_n \left( H \cdot \Cdtstep^2 \frac{e^{2 \beta R}}{n^{3/2}} + \beta H \varepsilon_{\texttt{PRM}} \right)
\end{align*}
Here, we assume that the PRM reward range is $[0,R]$.
\end{theorem}
Thus, even when the PRM departs from the idealized form we assume earlier in \Cref{ass:prm-apx}, the algorithm still admits convergence guarantees toward the optimal policy under the \textit{ground-truth reward function}, $\pi^\star_\beta$.

\paragraph{Proof of \Cref{theorem:prm-apx}.}
With an arbitrary choice of PRM $\rprm$, it is a simple modification of the proof of \Cref{theorem:soft-bon} to see that,
\begin{align*}
    \kl ( \pi_{\specalign} \| \pi_{\beta,\prm}^\star) \le \widetilde{\mathcal{O}}_n \left( H \cdot \Cdtstep^2 \frac{e^{2 \beta R}}{n^{3/2}} \right)
\end{align*}
Where, similar to \cref{eq:decomp},
\begin{align*}
    \pi_{\beta,\prm}^\star (y | x) \triangleq \prod_{t=1}^H \pi_{\beta,\prm}^\star (y_t | x, y_{\le t-1})
\end{align*}
Where, we define,
\begin{align*}
\pi^\star_{\beta,\prm} (y_t | x, y_{\le t-1} ) \propto \pitarget(y_t | x, y_{\le t-1}) e^{\beta [ \rprm (y_t | x, y_{\le t-1})]}
\end{align*}
The statement of theorem $2$ will follow from the following inequality,
\begin{align}
    \kl ( \pi_{\specalign} \| \pi_{\beta}^\star) \le \kl ( \pi_{\specalign} \| \pi_{\beta,\prm}^\star) + \log \left( \left\| \frac{\pi_{\beta}^\star}{\pi_{\beta,\prm}^\star} \right\|_\infty \right) \label{eq:99912}
\end{align}
What remains is to bound the worst-case density ratio between $\pi^\star_\beta$ and $\pi^\star_{\beta,\prm}$. Noting from \cref{eq:decomp},
\begin{align*}
    &\frac{\pi_{\beta}^\star (y_t | x, y_{\le t-1})}{\pi_{\beta,\prm}^\star (y_t | x, y_{\le t-1})} \\
    &= \frac{e^{\beta [ \rprmtarget (y_t | x, y_{\le t-1}) ]}}{e^{\beta [ \rprm (y_t | x, y_{\le t-1})]}} \cdot \frac{\sum_{y'} \pitarget(y' | x, y_{\le t-1}) e^{\beta [ \rprm (y_t | x, y_{\le t-1})]}}{\sum_{y'} \pitarget(y' | x, y_{\le t-1}) e^{\beta [ \rprmtarget (y_t | x, y_{\le t-1})]}} \\
    &\le e^{2\beta \varepsilon_{\texttt{PRM}}}
\end{align*}
Where the last inequality uses the fact that $\rprm$ satisfies \Cref{ass:prm-apx}. Combining with \cref{eq:99912} completes the proof.

\section{Analysis of SMC: Proof of \Cref{theorem:soft-bon,theorem:lb-softbon}} \label{sec:proof-sampling-partition}

We begin with the proof of \Cref{theorem:soft-bon}; the last part of this section will be dedicated toward proving \Cref{theorem:lb-softbon}. Prior to diving into the proof, we will first introduce some additional notation. Let $N_y$ denote the number of occurrences of a particular $y \in \mathcal{Y}$ in the collection of $N \sim \textsf{Poi}(n)_{>0}$ responses, $\Ydraft$, drawn i.i.d. from $\pigen(\cdot|x)$. Recall the definition of the score function $\phi_\beta (\cdot|\cdot)$ as defined in \Cref{def:score}. Furthermore, define,
\begin{align*}
\Phi_\beta (s) &= \mathbb{E}_{Y \sim \pigen(\cdot|x)} \left[e^{-s\,\phi_\beta (Y|x)}\right]
\end{align*}
as the Laplace transform of $\phi_\beta (Y|x)$ for $Y \sim \pigen(\cdot|x)$. Define,
\begin{align} \label{eq:pipartapxN}
    \pipartapx{N} (y|x) &= \mathbb{E} \left[ \frac{N_y \phi_\beta (y|x)}{Z_{\Ydraft,\beta}(x)} \middle| N \right]
\end{align}
Furthermore, let us define,
\begin{align}
    \barpipartapx{n} (y|x) &= \bbE_{N \sim \Poi(n)|_{>0}} [\pipartapx{N} (y|x)],
\end{align}
where $N_0 \sim \textsf{Poi} (n)$.
Note that with our definition of scores (\Cref{def:score}), we have,
\begin{align*}
    \pi_{\beta}^\star (y|x) = \frac{\pitarget (y|x) e^{\beta r (y|x)}}{\mathbb{E}_{Y \sim \pitarget(\cdot|x)} [ e^{\beta r(Y|x)}]} = \frac{\pigen(y|x) \phi_\beta (y|x)}{\mathbb{E}_{Y \sim \pigen(\cdot|x)} [ \phi_\beta (Y|x)] } = \frac{\pigen(y|x) \phi_\beta (y|x)}{-\Phi_\beta' (0)}.
\end{align*}
Our objective is to upper bound,
\begin{align*}
    \mathbb{E}_{N \sim \textsf{Poi} (n) |_{>0}} \left[ D_{\chi^2} (\pipartapx{N} \| \pi_{\beta}^\star ) \right] &= \sum_{y \in \mathcal{Y}} \mathbb{E}_{N \sim \textsf{Poi} (n) |_{>0}} \left[ \frac{(\pipartapx{N} (y|x))^2}{\pi_{\beta}^\star (y|x)} \right]
\end{align*}

Our first result will be to provide a formula for the probability that a response $y$ is sampled by $\barpipartapx{n} = \bbE [ \pipartapx{N}]$.

\begin{lemma} \label{lemma:integral-0}
$\barpipartapx{n}$ can be written as the following integral,
\begin{align*}
\barpipartapx{n} (y|x) = \frac{n \pigen(y|x) \phi_\beta (y|x)}{1 - e^{-n}} \int_0^\infty e^{n(\Phi_\beta (s)-1)} \cdot e^{-s \phi_\beta (y|x)} \dd s.
\end{align*}
\end{lemma}
\begin{proof}
Our first step we will be to arrive at an integral representation of $1/Z_{\Ydraft,\beta}(x)$. For any $A > 0$, $A^{-1} = \int_0^\infty e^{-s A} \dd s$. With the choice $A = Z_{\Ydraft,\beta}(x)$, by Fubini's theorem,
\begin{align}
\mathbb{E} \left[\frac{\mathbb{I} (N \ge 1) \cdot N_y \phi_\beta (y|x)}{Z_{\Ydraft,\beta}(x)} \right] &= \phi_\beta (y|x) \int_0^\infty \mathbb{E} \left[ N_y e^{-s\,Z_{\Ydraft,\beta}(x)}\, \mathbb{I} (N \ge 1) \right]\,\dd s = \int_0^\infty \Psi_y (s)\,\dd s \label{eq:Psi-0}
\end{align}
where we define,
\begin{align*}
\Psi_y (s) = \phi_\beta (y|x) \mathbb{E} \left[ N_y e^{-s\,Z_{\Ydraft,\beta}(x)}\,\mathbb{I} (N_0 \ge 1) \right] = \phi_\beta (y|x) \mathbb{E} \left[ N_y e^{-s\,Z_{\Ydraft,\beta}(x)} \right],
\end{align*}
which uses the fact that $N_y = 0$ if $N_0 = 0$. The remainder of this proof will be to analyze $\Psi_y (s)$.\\ Define $N_y = \sum_{y' \in \Ydraft} \mathbb{I} (y' = y)$ as the number of times $y$ appears in $\Ydraft$. Then,
\begin{align*}
\Psi_y (s) &= \mathbb{E} \left[ N_y e^{-s\,\sum_{y_0 \in \mathcal{Y}} N_{y_0} \phi_\beta (y_0|x) } \right] \\
 &= \mathbb{E} \left[ N_y e^{-s N_y \phi_\beta (y|x)} \right] \cdot \prod_{y_0 \in \mathcal{Y} \setminus \{ y \}}\mathbb{E} \left[ e^{-s N_{y_0} \phi_\beta (y_0|x) } \right] 
\end{align*}
where the second equation uses independence admitted by the Poisson structure. Note that,
\begin{align*}
\mathbb{E} \left[ e^{-s N_{y_0} \phi_\beta (y_0|x) } \right] &= \sum_{j=0}^\infty e^{-s j \phi_\beta (y_0|x) } \Pr [N_y = j] \\
&= \sum_{j=0}^\infty e^{-s j \phi_\beta (y_0|x) } \cdot \frac{(n \pigen (y_0|x))^j}{j!} e^{-n \pigen (y_0|x)} \\
&= \exp (n \pigen (y_0|x) \big(e^{-s \phi_\beta (y_0|x) } - 1\big))
\end{align*}
A similar calculation results in,
\begin{align*}
\mathbb{E} \left[ N_y e^{-s N_{y} \phi_\beta (y|x) } \right] &= \sum_{j=0}^\infty j e^{-s j \phi_\beta (y|x) } \Pr [N_y = j] \\
&= \sum_{j=1}^\infty e^{-s j \phi_\beta (y|x) } \cdot \frac{(n \pigen(y|x))^j}{(j-1)!} e^{-n \pigen(y|x)} \\
&= n \pigen(y|x) e^{-s \phi_\beta (y|x) } \sum_{j=0}^\infty \cdot \frac{\big( n \pigen(y|x) e^{-s \phi_\beta (y|x) } \big)^j}{j!} e^{-n \pigen(y|x)} \\
&= n \pigen(y|x) e^{-s \phi_\beta (y|x) } \exp (n \pigen(y|x) \big(e^{-s \phi_\beta (y|x) } - 1\big))
\end{align*}
Multiplying everything out, we get,
\begin{align*}
\Psi_y (s) &= n \pigen(y|x) e^{-s \phi_\beta (y|x) } \exp (n \pigen(y|x) \big( e^{-s \phi_\beta (y|x) } - 1 \big)) \cdot \prod_{y_0 \in \mathcal{Y} \setminus \{ y \}} \exp (n \pigen (y_0|x) \big(e^{-s \phi_\beta (y_0|x) } - 1\big)) \\
&= n \pigen(y|x) e^{-s \phi_\beta (y|x)} e^{n (\Phi_\beta (s)-1)}
\end{align*}
Plugging into \cref{eq:Psi} completes the proof.
\end{proof}

\noindent As a sanity check, we can check that the probability calculated in \Cref{lemma:integral-0} integrates to $1$ over $y \in \mathcal{Y}$. Notably,
\begin{align*}
    \sum_{y \in \mathcal{Y}} \barpipartapx{n} (y|x) &= \sum_{y \in \mathcal{Y}} \frac{n \pigen(y|x) \phi_\beta (y|x)}{1 - e^{-n}} \int_0^\infty e^{n(\Phi_\beta (s)-1)} \cdot e^{-s \phi_\beta (y|x)} \dd s \\
    &= \frac{n}{1 - e^{-n}} \int_0^\infty e^{n(\Phi_\beta (s)-1)} \cdot \mathbb{E}_{Y \sim \pigen(\cdot|x)} [ \phi_\beta (Y|x) e^{-s \phi_\beta (Y|x)}] \dd s \\
    &= \frac{- n}{1 - e^{-n}} \int_0^\infty e^{n(\Phi_\beta (s)-1)} \cdot \Phi'_\beta (s) \dd s \\
    &\overset{(a)}{=} \frac{1}{1 - e^{-n}} \int_0^{n} e^{-t} \cdot \dd t \\
    &= 1
\end{align*}
where in $(a)$, $t = n (1-\Phi_\beta (s))$.

Our next result will be to provide a formula for $\mathbb{E} [ (\pipartapx{N} (y|x))^2 ]$.

\begin{lemma} \label{lemma:integral}
$\mathbb{E} [ (\pipartapx{N} (y|x))^2 ]$ has the following integral form,
\begin{align*}
&\mathbb{E} [ (\pipartapx{N} (y|x))^2 ] \\
&= \frac{(\phi_\beta (y|x))^2}{1 - e^{-n}} \left( \int_0^\infty \int_0^\infty \big( n + n^2 \Phi_\beta (s_1) \Phi_\beta (s_2) \big) \left( (\pigen(y|x))^2 e^{- (s_1+s_2) \phi_\beta (y|x)} \right) \right. \\
&\qquad \times \exp (n \big[ \Phi_\beta (s_1) \Phi_\beta (s_2) - 1\big]) \dd s_1 \, \dd s_2 \bigg)
\end{align*}
\end{lemma}
\begin{proof}
For any $A > 0$, note that $A^{-1} = \int_0^\infty e^{- s A} \dd s$. By Fubini's theorem, and letting $N_0 \sim \textsf{Poi} (n)$,
\begin{align}
&\underbrace{\Pr (N_0 \ge 1)}_{= 1 - e^{-n}} \cdot \mathbb{E}_N [ (\pipartapx{N} (y|x))^2 ] \nonumber\\
&= \mathbb{E} \left[ \mathbb{E} \left[\frac{\mathbb{I} (N_0 \ge 1) \cdot N_y \phi_\beta (y|x)}{Z_{\Ydraft,\beta}(x)} \middle| N_0 \right]^2 \right] \nonumber\\
&\overset{(a)}{=} (\phi_\beta (y|x))^2 \cdot \mathbb{E} \left[ \frac{\mathbb{I} (N_0 \ge 1) \cdot N_y \widetilde{N}_y }{Z_{\Ydraft,\beta}(x) \cdot \widetilde{Z}_{\Ydraft,\beta}(x)} \right] \nonumber\\
&= (\phi_\beta (y|x))^2 \int_0^\infty \int_0^\infty \mathbb{E} \left[ N_y \widetilde{N}_y \cdot e^{- s_1 \, Z_{\Ydraft,\beta}(x) - s_2 \, \widetilde{Z}_{\Ydraft,\beta}(x)}\, \mathbb{I} (N_0 \ge 1) \right]\,\dd s_1 \, \dd s_2 \nonumber 
\end{align}
where in $(a)$, $\widetilde{N}_y$ and $\widetilde{Z}_{\Ydraft,\beta}(x)$ denote independent copies of $N_y$ and $Z_{\Ydraft,\beta}(x)$ conditioned on $N_0$. This can be further rewritten as,
\begin{align}
\Pr (N_0 \ge 1) \cdot \mathbb{E}_N [ (\pipartapx{N} (y|x))^2 ] &= ( \phi_\beta (y|x))^2 \int_0^\infty \int_0^\infty \Phi (s_1, s_2) \,\dd s_1 \, \dd s_2  \label{eq:Psi}
\end{align}
where we define,
\begin{align*}
\Psi_y (s_1 , s_2) = \mathbb{E} \left[ N_y \widetilde{N}_y \cdot e^{- s_1 \, Z_{\Ydraft,\beta}(x) - s_2 \, \widetilde{Z}_{\Ydraft,\beta}(x)} \right]
\end{align*}
which uses the fact that $N_y = \widetilde{N}_y = 0$ if $N_0 = 0$. The remainder of this proof will be to analyze $\Psi_y (s_1 , s_2)$. First we describe a unified way to treat $\{ N_y \}$ and $\{ \widetilde{N}_y \}$. Across $N_0 \sim \textsf{Poi} (n)|_{>0}$ trials, suppose we draw a $z^i = y_1^i y_2^i \sim \pigen (\cdot|x) \times \pigen (\cdot|x)$ in each round. At the end of these rounds, we will let $\sum_{i=1}^{N_0} \mathbb{I} (z^i_1 = y) = N_y$ and $\sum_{i=1}^{N_0} \mathbb{I} (z^i_2 = y) = \widetilde{N}_y$. For any $z \in \mathcal{Y}^2$, let $N_z$ denote the number of occurrences of $z$ among the outcomes of the trials. By the Poisson structure, we have that $N_z$ is independent across different values of $z$. Furthermore, we have that $N_y = \sum_{z \in \mathcal{Y}^2 : z_1 = y} N_z$ and $\widetilde{N}_y = \sum_{z \in \mathcal{Y}^2 : z_2 = y} N_z$. This means that,
\begin{align}
\Psi_y (s_1, s_2)
&= \mathbb{E} \left[ \left( \sum\nolimits_{z : z_1 = y} N_z \right) \left( \sum\nolimits_{\widetilde{z} : \widetilde{z}_2 = y} N_{\widetilde{z}} \right) \cdot e^{- \sum_{z'} N_{z'} \left( s_1 \phi_\beta (z'_1|x) + s_2 \phi_\beta (z'_2|x) \right)} \right] \nonumber \\
&= \mathbb{E} \left[ \left( \sum\nolimits_{\substack{z : z_1 = y \\ \widetilde{z} : \widetilde{z}_2 = y \\ z \ne \widetilde{z}}} N_z N_{\widetilde{z}} + N_{yy}^2 \right) \cdot e^{- \sum_{z'} N_{z'} \alpha (z')} \right] \label{eq:Psis1s2}
\end{align}
Where $\alpha (z') = s_1 \phi_\beta (z_1'|x) + s_2 \phi_\beta (z_2'|x)$. Breaking down the first bracket into a summation across individual terms, we will first bound for $z \ne \widetilde{z}$ such that $z_1 = y$ and $\widetilde{z}_2 = y$,
\begin{align}
    \mathbb{E} \left[ N_z N_{\widetilde{z}} \cdot e^{- \sum_{z'} N_{z'} \alpha (z')} \right]
    &= \mathbb{E} \left[ N_z e^{- N_z \alpha (z)}  \right] \mathbb{E} \left[ N_{\widetilde{z}} e^{- N_{\widetilde{z}} \alpha (\widetilde{z})}  \right] \prod_{z' \not\in \{ z, \widetilde{z} \}} \mathbb{E} \left[ e^{- N_{z'} \alpha (z')} \right] \label{eq:00199191954}
\end{align}
Let $\pigen (z'|x) = \pigen (z_1' |x) \pigen (z_2' | x)$. Then, for any $\omega \in \mathbb{R}$,
\begin{align*}
\mathbb{E} \left[ e^{-\omega N_{z'} } \right] &= \sum_{j=0}^\infty e^{-\omega j} \Pr [N_{z'} = j] \\
&= \sum_{j=0}^\infty e^{-\omega j} \cdot \frac{\big( n \pigen (z' |x) \big)^j}{j!} e^{-n \pigen (z'|x)} \\
&= \exp (n \pigen (z'|x) \big(e^{-\omega } - 1\big))
\end{align*}
For any $k \ge 0$, a similar calculation results in,
\begin{align*}
\mathbb{E} \left[ e^{-\omega N_{z'}} \cdot \prod_{l=0}^k (N_{z'} - l) \right] &= \sum_{j=0}^\infty \prod_{l=0}^k (j - l) e^{-\omega j} \Pr [N_{z'} = j] \\
&= \sum_{j=k+1}^\infty e^{-\omega j} \frac{(n \pigen(z'|x))^j}{(j-k-1)!} e^{-n \pigen(z'|x)} \\
&= \big(n \pigen(z'|x) e^{-\omega} \big)^{k+1} \cdot \sum_{j=0}^\infty e^{-\omega j} \frac{(n \pigen(z'|x))^j}{j!} e^{-n \pigen(z'|x)} \\
&= \big( n \pigen(z'|x) e^{-\omega} \big)^{k+1} \cdot \exp (n \pigen (z'|x) \big(e^{-\omega } - 1\big))
\end{align*}
Combining with \cref{eq:00199191954} (with appropriately instantiated $\omega$ and $k$),
\begin{align*}
    &\mathbb{E} \left[ N_z N_{\widetilde{z}} \cdot e^{- \sum_{z'} N_{z'} \alpha (z')} \right] \\
    &= \big( n \pigen(z|x) e^{- \alpha (z)} \big) \big( n\pigen(\widetilde{z}|x) e^{- \alpha (\widetilde{z})} \big) \prod_{z'} \exp (n \pigen (z'|x) \big(e^{-\alpha (z') } - 1\big)) \\
    &= \big( n \pigen(z|x) e^{- \alpha (z)} \big) \big( n\pigen(\widetilde{z}|x) e^{- \alpha (\widetilde{z})} \big) \exp (n \mathbb{E}_{z' \sim \pigen (\cdot|x)} \big[ e^{-s_1 \phi_\beta (z_1'|x) + s_2 \phi_\beta (z_2'|x)} - 1\big]) \\
    &= \big( n \pigen(z|x) e^{- \alpha (z)} \big) \big( n\pigen(\widetilde{z}|x) e^{- \alpha (\widetilde{z})} \big) \exp (n \big[ \Phi_\beta (s_1) \Phi_\beta (s_2) - 1\big])
\end{align*}
On the other hand,
\begin{align*}
    &\mathbb{E} \left[ N_{yy}^2 \cdot e^{- \sum_{z'} N_{z'} \alpha (z')} \right] \\
    &= \mathbb{E} \left[ N_{yy}^2 e^{- N_{yy} \alpha ( yy)}  \right] \prod_{z' \ne yy} \mathbb{E} \left[ e^{- N_{z'} \alpha (z')} \right] \\
    &= \mathbb{E} \left[ \big( N_{yy} (N_{yy} - 1) + N_{yy} \big) e^{- N_{yy} \alpha ( yy)} \right] \prod_{z' \ne yy} \mathbb{E} \left[ e^{- N_{z'} \alpha (z')} \right] \\
    &= \left( \big( n \pigen(yy|x) e^{-\alpha (yy)} \big)^2 + \big( n \pigen(yy|x) e^{-\alpha (yy)} \big) \right) \cdot \exp (n \pigen (z'|x) \big(e^{- \alpha (yy) } - 1\big)) \\
    &\qquad \times \prod_{z' \ne yy} \exp (n \pigen (z'|x) \big(e^{- \alpha (z') } - 1\big)) \\
    &= \left( \big( n \pigen(yy|x) e^{-\alpha (yy)} \big)^2 + \big( n \pigen(yy|x) e^{-\alpha (yy)} \big) \right) \cdot \exp (n \big[ \Phi_\beta (s_1) \Phi_\beta (s_2) - 1\big])
\end{align*}
Combining with \cref{eq:Psis1s2},
\begin{align*}
    \Psi_y (s_1,s_2) &= \sum_{\substack{z : z_1 = y\\\widetilde{z} : \widetilde{z}_2 = y}} \big( n \pigen(z|x) e^{- \alpha (z)} \big) \big( n\pigen(\widetilde{z}|x) e^{- \alpha (\widetilde{z})} \big) \exp (n \big[ \Phi_\beta (s_1) \Phi_\beta (s_2) - 1\big]) \\
    &\qquad + \big( n \pigen(yy|x) e^{-\alpha (yy)} \big) \cdot \exp (n \big[ \Phi_\beta (s_1) \Phi_\beta (s_2) - 1\big]) \\
    &= \Phi_\beta (s_1) \Phi_\beta (s_2) \big( n \pigen(y|x) \big)^2 e^{- (s_1+s_2) \phi_\beta (y|x)} \exp (n \big[ \Phi_\beta (s_1) \Phi_\beta (s_2) - 1\big]) \\
    &\qquad + \big( n (\pigen(y|x))^2 e^{-(s_1+s_2) \phi_\beta (y|x)} \big) \cdot \exp (n \big[ \Phi_\beta (s_1) \Phi_\beta (s_2) - 1\big]) \\
    &= \big( n^2 \Phi_\beta (s_1) \Phi_\beta (s_2) + n \big) \big( \pigen(y|x) \big)^2 e^{- (s_1+s_2) \phi_\beta (y|x)} \exp (n \big[ \Phi_\beta (s_1) \Phi_\beta (s_2) - 1\big])
\end{align*}
Plugging into \cref{eq:Psi} completes the proof.
\end{proof}

\subsection{Bounding the KL-/$\chi^2$-divergence: Proof of \Cref{theorem:soft-bon}}

Observe that,
\begin{align}
    1 + \mathbb{E}_{N \sim \textsf{Poi} (n) |_{>0}} \left[ D_{\chi^2} (\pipartapx{N} \| \pi_{\beta}^\star ) \right]
    &= \sum_{y \in \mathcal{Y}} \mathbb{E}_{N \sim \textsf{Poi} (n) |_{>0}} \left[ \frac{(\pipartapx{N} (y|x))^2}{\pi_{\beta}^\star (y|x)} \right] \nonumber
\end{align}
Recall the calculation of $\mathbb{E} [ \pipartapx{N} (y|x))^2]$ in \Cref{lemma:integral}.\\ Noting that $\sum_{y \in \mathcal{Y}} \pigen (y|x) \phi_\beta (y|x) e^{-(s_1+s_2) \phi_\beta (y|x)} = (-\Phi_\beta' (s_1+s_2))$,
\begin{align}
    &\sum_{y \in \mathcal{Y}} \mathbb{E} \left[ \frac{(\pipartapx{N} (y|x))^2}{\pi_{\beta}^\star (y|x)} \right] \nonumber\\
    &= \frac{1}{1 - e^{-n}} \left( \int_0^\infty \int_0^\infty \left( n^2 \Phi_\beta (s_1) \Phi_\beta (s_2) + n \right) \big( (-\Phi_\beta' (0)) (- \Phi'_\beta (s_1+s_2)) \big) \right. \nonumber\\
    &\hspace{18em} \times \exp (n \big[ \Phi_\beta (s_1) \Phi_\beta (s_2) - 1\big])  \dd s_1 \, \dd s_2 \bigg) \label{eq:chi^2}
\end{align}
Suppose that instead of the product $(-\Phi_\beta' (0)) (- \Phi'_\beta (s_1+s_2))$, we in fact had $(-\Phi_\beta' (s_1)) (- \Phi'_\beta (s_2))$, we will show that the double integral exactly equals $1 - e^{-n}$. Indeed, observe that,
\begin{align}
    &\int_0^\infty \int_0^\infty \left( n^2 \Phi_\beta (s_1) \Phi_\beta (s_2) + n \right) \big( (-\Phi_\beta' (s_1)) (- \Phi'_\beta (s_2)) \big) \exp (n \big[ \Phi_\beta (s_1) \Phi_\beta (s_2) - 1\big])  \dd s_1 \, \dd s_2 \nonumber\\
    &= \int_0^\infty \int_0^\infty \left( n^2 \Phi_\beta (s_1) \Phi_\beta (s_2) + n \right) \exp (n \big[ \Phi_\beta (s_1) \Phi_\beta (s_2) - 1\big])  \dd \Phi_\beta (s_1) \, \dd \Phi_\beta (s_2) \nonumber\\
    &= \int_0^1 \int_0^1 \left( n^2 u_1 u_2 + n \right) \exp (n \big[ u_1 u_2 - 1\big])  \dd u_1 \, \dd u_2 \nonumber\\
    &= \int_0^1 \int_0^1 \frac{\dd^2}{\dd u_1 \dd u_2} \exp (n \big[ u_1 u_2 - 1\big])  \dd u_1 \, \dd u_2 \nonumber\\
    &\overset{(a)}{=} g_n (1,1) - g_n (1,0) - g_n (0,1) + g_n (0,0) \nonumber\\
    &= 1 - e^{-n}. \label{eq:575784848}
\end{align}
where in $(a)$, $g_n (u_1,u_2) = \exp (n \big[ u_1 u_2 - 1\big])$. In order to bound the integral in the form present in \cref{eq:chi^2}, we split it into $2$ parts; denoting $p_n = \log(n^4)/n$,
\begin{align*}
    \mathcal{R}_{0} &= \{ (s_1,s_2) \in \mathbb{R}^2 : \min \{ \Phi_\beta (s_1), \Phi_\beta (s_2) \} \ge 1 - p_n \} \\
    \mathcal{R}_{1} &= \{ (s_1,s_2) \in \mathbb{R}^2 : \min \{ \Phi_\beta (s_1) , \Phi_\beta (s_2) \} \le 1 - p_n \}.
\end{align*}
And let $\mathcal{I}_0$ and $\mathcal{I}_1$ denote the integral \cref{eq:chi^2} (which is over $\mathbb{R}^2$) over the disjoint regions $\mathcal{R}_0$ and $\mathcal{R}_1$.

\begin{lemma}[Bounding the integral $\mathcal{I}_0$] \label{lemma:I0} Assuming that $n$ is sufficiently large so that $0 \le p_n \le \frac{1}{\sqrt{2}}$,
\begin{align*}
    \mathcal{I}_{0} \le \exp \big( p_n^2 \left( \phi_{\max}/\phi_{\min} - 1 \right)^2 \big).
\end{align*}
Recall here, that $\phi_{\max} \triangleq \max_{x \in \mathcal{X},y \in \mathcal{Y}} \phi_\beta (y|x)$ and $\phi_{\min} \triangleq \min_{x \in \mathcal{X},y \in \mathcal{Y}} \phi_\beta (y|x)$. \end{lemma}
\begin{proof}
Observe that $\mathcal{I}_{0}$ corresponds to integrating around a small neighborhood of $(0,0)$, and to this end, we first bound $(-\Phi'_\beta (0))(-\Phi'_\beta (s_1+s_2))$ for $(s_1,s_2) \in \mathcal{R}_{0}$, showing that it is approximately equal to $(-\Phi_\beta' (s_1)) (-\Phi_\beta' (s_2))$ in this regime. The resulting integral can be bounded using the calculation done in \cref{eq:575784848}.\\

As $s_1$ and $s_2$ become smaller, which is the case as $n$ grows, the approximation becomes better: this essentially follows from the fact that $f(s) = \log \big(-\Phi_\beta' (s) \big)$ is a bounded, smooth convex function and behaves locally linearly. Formally,
\begin{align}
    &f(s_1 + s_2) + f(0) - f(s_1) - f(s_2) \nonumber\\
    &= \int_0^{s_1} \int_0^{s_2} f'' (u_1 + u_2) \dd u_1 \dd u_2 \nonumber\\
    &\overset{(a)}{=} \int_0^{s_1} \int_0^{s_2} \frac{\Phi'''_\beta (u) \Phi'_\beta (u) - (\Phi''_\beta (u))^2}{(\Phi'_\beta (u))^2} \dd u_1 \dd u_2 \nonumber\\
    &= \int_0^{s_1} \int_0^{s_2} \frac{\Phi'''_\beta (u) \Phi'_\beta (u) - (\Phi''_\beta (u))^2}{(\Phi'_\beta (u))^2 \Phi'_\beta (u_1) \Phi'_\beta (u_2)} \dd \Phi_\beta (u_1) \dd \Phi_\beta (u_2) \nonumber\\
    &\le \max_{u_1 \in [0,s_1]} \max_{u_2 \in [0,s_2]} \frac{\Phi'''_\beta (u) \Phi'_\beta (u) - (\Phi''_\beta (u))^2}{(\Phi'_\beta (u))^2 \Phi'_\beta (u_1) \Phi'_\beta (u_2)} \cdot \int_0^{s_1} \int_0^{s_2} \dd \Phi_\beta (u_1) \dd \Phi_\beta (u_2) \nonumber\\
    &\le p_n^2 \cdot \max_{u_1 \in [0,s_1]} \max_{u_2 \in [0,s_2]} \frac{\Phi'''_\beta (u) \Phi'_\beta (u) - (\Phi''_\beta (u))^2}{(\Phi'_\beta (u))^2 \Phi'_\beta (u_1) \Phi'_\beta (u_2)} \label{eq:004}
\end{align}
where in $(a)$, $u = u_1 + u_2$. In the $\mathcal{R}_{0}$ regime, letting $Y,Y' \overset{\text{i.i.d.}}{\sim} \pigen(\cdot|x)$,
\begin{align*}
    &\max_{u_1 \in [0,s_1]} \max_{u_2 \in [0,s_2]} \frac{\Phi'''_\beta (u) \Phi'_\beta (u) - (\Phi''_\beta (u))^2}{(\Phi'_\beta (u)^2 \Phi'_\beta (u_1) \Phi'_\beta (u_2)}\\
    &= \frac{\mathbb{E} \big[ \phi_\beta (Y|x) \phi_\beta (Y'|x) \cdot (\phi_\beta (Y|x) - \phi_\beta (Y'|x))^2 \cdot e^{-u (\phi_\beta (Y|x) + \phi_\beta (Y'|x))} \big] }{2 \mathbb{E} \big[ \phi_\beta (Y|x) \cdot e^{-u (\phi_\beta (Y|x)} \big]^2 \cdot \mathbb{E} \big[ \phi_\beta (Y|x) \phi_\beta (Y'|x) \cdot e^{-u (\phi_\beta (Y|x) + \phi_\beta (Y'|x))} \big] } \\
    &\overset{(a)}{\le} \frac{\mathbb{E} \big[ \phi_\beta (Y|x) \phi_\beta (Y'|x) \cdot (\phi_\beta (Y|x) - \phi_\beta (Y'|x))^2 \cdot e^{-u (\phi_\beta (Y|x) + \phi_\beta (Y'|x))} \big] }{2 \phi_{\min}^2 (1 - p_n)^2 \cdot \mathbb{E} \big[ \phi_\beta (Y|x) \phi_\beta (Y'|x) \cdot e^{-u (\phi_\beta (Y|x) + \phi_\beta (Y'|x))} \big]} \\
    &\le \frac{(\phi_{\max} - \phi_{\min} )^2}{2 \phi_{\min}^2 (1 - p_n)^2} \\
    &\le (\phi_{\max} / \phi_{\min} - 1)^2,
\end{align*}
where $(a)$ uses the assumption that $\Phi_\beta (u_1), \Phi_\beta (u_2) \ge 1 - p_n$ in this regime, and the last inequality assumes that $n$ is sufficiently large that $p_n \le \frac{1}{\sqrt{2}}$. Combining with \cref{eq:004},
\begin{align*}
    \frac{(-\Phi'_\beta (s_1+s_2))(-\Phi'_\beta (0))}{(-\Phi'_\beta (s_1))(-\Phi'_\beta (s_2))} \le \exp \left( p_n^2 \left( \frac{\phi_{\max}}{\phi_{\min}} - 1 \right)^2 \right).
\end{align*}
Combining with \cref{eq:575784848} completes the proof.
\end{proof}

\begin{lemma}[Bounding the integral $\mathcal{I}_1$] \label{lemma:I1} We have that,
\begin{align*}
    \mathcal{I}_{1} \le \frac{4}{n^2} \cdot \frac{\phi_{\max}}{ \phi_{\min}}
\end{align*}
\end{lemma}
\begin{proof}
In the $\mathcal{R}_1$ regime, we have that $n(1 - \Phi_\beta (s_1) \Phi_\beta (s_2)) \ge \log(n^4)$. Therefore,
\begin{align*}
    \mathcal{I}_{1} &= \frac{1}{1-e^{-n}} \left( \int_{\mathcal{R}_1} \left( n^2 \Phi_\beta (s_1) \Phi_\beta (s_2) + n \right) \big( (-\Phi_\beta' (0)) (- \Phi'_\beta (s_1+s_2)) \big) \right. \\
    &\hspace{18em} \exp (n \big[ \Phi_\beta (s_1) \Phi_\beta (s_2) - 1\big])  \dd s_1 \, \dd s_2 \bigg) \nonumber\\ 
    &\le 4 n^2 \int_{\mathcal{R}_1}  \big( (-\Phi_\beta' (0)) (- \Phi'_\beta (s_1+s_2)) \big) \exp ( - \log(n^4) ) \dd s_1 \, \dd s_2 \nonumber\\
    &= \frac{4}{n^2} \int_{\mathcal{R}_1} \mathbb{E} [ \phi_\beta (Y|x)] \cdot \mathbb{E} \big[\phi_\beta (Y|x) e^{- (s_1+s_2) \phi_\beta (Y|x)} \big] \dd s_1 \, \dd s_2 \nonumber\\
    &= \frac{4}{n^2} \cdot \mathbb{E} [ \phi_\beta (Y|x)] \cdot \mathbb{E} \big[\phi_\beta (Y|x))^{-1}\big]\\
    &= \frac{4}{n^2} \cdot \frac{\phi_{\max}}{\phi_{\min}} \\
    &\le \frac{4}{n^2} \cdot \frac{\phi_{\max}}{\phi_{\min}}
\end{align*}
\end{proof}

\subsection{Proof of \Cref{theorem:soft-bon}}

\begin{proof}
This is a consequence of the fact that by \cref{eq:chi^2},
\begin{align*}
    1 + D_{\chi^2} (\barpipartapx{n} \| \pi_{\beta}^\star ) = \mathcal{I}_0 + \mathcal{I}_1
\end{align*}
Combining with \Cref{lemma:I0,lemma:I1}, this results in the bound,
\begin{align*}
    1 + \bbE [ D_{\chi^2} (\pipartapx{N} \| \pi_{\beta}^\star ) ] &\le \frac{\phi_{\max}}{\phi_{\min}} \cdot \frac{4}{n^2} + \exp \left( p_n^2 \left( \frac{\phi_{\max}}{\phi_{\min}} - 1 \right)^2 \right).
\end{align*}
Simplifying this by noting that $z \le e^z - 1$, gives the inequality,
\begin{align*}
    1 + \bbE [ D_{\chi^2} (\pipartapx{N} \| \pi_{\beta}^\star ) ] &\le 1 + 2 \left( \exp \left( 4 p_n^2 \left( \frac{\phi_{\max}}{\phi_{\min}} \right)^2 \right) - 1 \right). 
\end{align*}
Taking a logarithm on both sides, noting that $\kl (p\|q) \le \log (1 + D_{\chi^2} (p\| q))$ and simplifying the RHS with the inequality, $\log (1 + c(e^x-1)) \le cx$ for $c \ge 1$ completes the proof.
\end{proof}

\subsection{Proof of \Cref{theorem:lb-softbon}}

Consider the following problem instance where there is a single prompt (which we will avoid stating in the reward and policy notations), and the response space $\mathcal{Y} = \{ y_0,y_1\}$. $r(y_0) = 0$ and $r(y_1) = R$, and $1-\pitarget (y_0) = \pitarget (y_1) = \frac{1}{1+\theta}$ where we denote $\theta = e^{\beta R}$ for the sake of brevity. The optimal aligned model is,
\begin{align*}
    \pi^\star_{w,\beta} (y_0) = \pi^\star_{w,\beta} (y_1) = \frac{1}{2}
\end{align*}
For this model, the Laplace transform of the score function is,
\begin{align} \label{eq:lap}
    \Phi_\beta (s) = \mathbb{E}_{Y \sim \pitarget(\cdot|x)} [e^{-s \phi_\beta (Y|x)}] = \frac{\theta e^{-s} + e^{-s \theta}}{1 + \theta}
\end{align}
From \Cref{lemma:integral},
\begin{align*}
    \barpipartapx{n} (y_0) &= \frac{\theta}{1+\theta} \cdot \frac{n}{(1 - e^{-n})} \int_0^\infty e^{n(\Phi_\beta (s)-1)} \cdot e^{-s} \dd s \\
    &= \frac{\theta}{1+\theta} \cdot \frac{n}{(1 - e^{-n})} \int_0^1 e^{n \left(\frac{\theta t + t^{\theta}}{1 + \theta} - 1\right)} \dd t
\end{align*}
where the last equation uses the structure of the Laplace transform of the score function in \cref{eq:lap}. Let $h = g (t) = \frac{\theta t + t^{\theta}}{1 + \theta}$, and thereby, $t = g^{-1} (h)$ and $\dd t = 1/g' (g^{-1} (h)) \dd h$.
\begin{align*}
    \barpipartapx{n} (y_0) &= \frac{\theta}{1+\theta} \cdot \frac{n}{(1 - e^{-n})} \int_0^1 \frac{e^{n \left(h - 1\right)}}{g' (g^{-1} (h))} \dd h \\
    &\overset{(b)}{=} \frac{\theta}{1+\theta} \cdot \frac{1}{(1 - e^{-n})} \int_0^n \frac{e^{-h_1}}{g' (g^{-1} (1 - h_1/n))} \dd h_1
\end{align*}
where in $(b)$ we plug in $h_1 = n (1-h)$. Observe that $g'(t) = \frac{\theta + \theta t^{\theta-1}}{1 + \theta} \le \frac{2 \theta}{1 + \theta}$ is an increasing function in $t$, and thereby,
\begin{align*}
    \barpipartapx{n} (y_0) &\ge \frac{1}{2}.
\end{align*}
In particular, we will show that $\left| \barpipartapx{n} (y_0) - \frac{1}{2} \right| \ge \mathcal{O}(\frac{\theta}{n})$. Since $g'$ and $g^{-1}$ are increasing functions,
\begin{align}
    \barpipartapx{n} (y_0)
    &\ge \frac{\theta}{1+\theta} \cdot \frac{1}{(1 - e^{-n})} \left[ \int_0^1 \frac{e^{-h_1}}{g' (g^{-1} (1))} \dd h_1 + \int_1^{n} \frac{e^{-h_1}}{g' ( g^{-1} (1 - 1/n))} \dd h_1 \right] \nonumber\\
    &= \frac{\theta}{1+\theta} \cdot \frac{1}{(1 - e^{-n})} \left[ (1 - e^{-1}) \frac{1 + \theta}{2 \theta} + \frac{e^{-1} - e^{-n}}{2\theta/(1+\theta)} \cdot \frac{2\theta/(1+\theta)}{g' ( g^{-1} (1 - 1/n))} \right] \nonumber\\
    &= \frac{1}{2} + \frac{e^{-1} - e^{-n}}{2} \left( \frac{2\theta/(1+\theta)}{g' ( g^{-1} (1 - 1/n))} - 1 \right) \label{eq:007}
\end{align}
Observe that,
\begin{align*}
    g \left(1 - \frac{1+\theta}{2\theta n} \right) &= \frac{\theta}{1+\theta} \left( 1 - \frac{1+\theta}{2 \theta n} \right) + \frac{1}{1+\theta} \left( 1 - \frac{1+\theta}{2\theta n} \right)^{\theta} \\
    &\ge \frac{\theta}{1+\theta} \left( 1 - \frac{1+\theta}{2 \theta n} \right) + \frac{1}{1+\theta} \left( 1 - \frac{1+\theta}{2 n} \right) \\
    &= 1 - \frac{1}{n}
\end{align*}
Since $g$ (and thereby $g^{-1}$) is increasing, this results in the inequality, $g^{-1} (1 - 1/n) \le 1 - \frac{1+\theta}{2\theta n}$. Since $g'$ is also increasing, we have the inequality,
\begin{align*}
    g' (g^{-1} (1 - 1/n)) &\le g' \Big( 1 - \frac{1+\theta}{2\theta n} \Big) \\
    &= \frac{\theta}{1+\theta} + \frac{\theta}{1+\theta} \left( 1 - \frac{1+\theta}{2\theta n} \right)^{\theta-1} \\
    &\le \frac{2\theta}{1+\theta} - \frac{\theta}{1+\theta} \left[ 1 - \left( 1 - \frac{1+\theta}{2\theta n} \right)^{\theta-1} \right]
\end{align*}
Assuming that $n \ge \theta \ge 2$, $\left( 1 - \frac{1+\theta}{2\theta n} \right)^{\theta-1} \ge 1 - \frac{1+\theta}{4n}$. This implies,
\begin{align*}
    g' (g^{-1} (1 - 1/n)) &\le \frac{2\theta}{1+\theta} - \frac{\theta}{4n}.
\end{align*}
Combining with \cref{eq:007},
\begin{align*}
    \barpipartapx{n} (y_0) - \frac{1}{2} &\ge \frac{e^{-1} - e^{-n}}{2} \left( \frac{2\theta/(1+\theta)}{2\theta/(1+\theta) - \theta/4n} - 1 \right) \\
    &\ge \frac{e^{-1} - e^{-n}}{2} \left( \frac{2}{2 - \theta/4n} - 1 \right) \\
    &\ge \frac{\theta}{50n},
\end{align*}
where the last inequality assumes that $n \ge 3$. This implies that,
\begin{align*}
    D_{\text{TV}} (\barpipartapx{n}  ,\pi_{\beta}^\star) \ge \frac{\theta}{50n}.
\end{align*}
The proof concludes by an application of Pinsker's inequality to lower bound the KL divergence by the squared TV distance, $D_{\text{TV}}^2 (p, q) \le \frac{1}{2} \kl (p,q)$). Finally, by the convexity of the KL divergence, we may take the expectation over $N \sim \Poi (n)|_{>0}$ outside the KL divergence.

\section{Additional Experiments for \specalign}
\label{sec:further_experiments}

\subsection{Latency Breakdown}
\label{sec:latency_breakdown}
In this experiment, we further hope to break down and understand the distribution of latency cost across the various components of \specalign: generation from the draft model (when deferred to), generation from the target model, and scoring using the draft and target models. We evaluate these across all three datasets, AMC23, MATH500 and OlympiadBench when the draft-target pair is Qwen2.5-1.5B-Instruct and Qwen2.5-7B-Instruct. We observe that generation from the target model is still a large component of the latency breakdown, since it is slower than the other steps. However, the overall latency can be brought down via generation from the draft model.

\begin{figure}[h]
    \centering
\includegraphics[width=\linewidth]{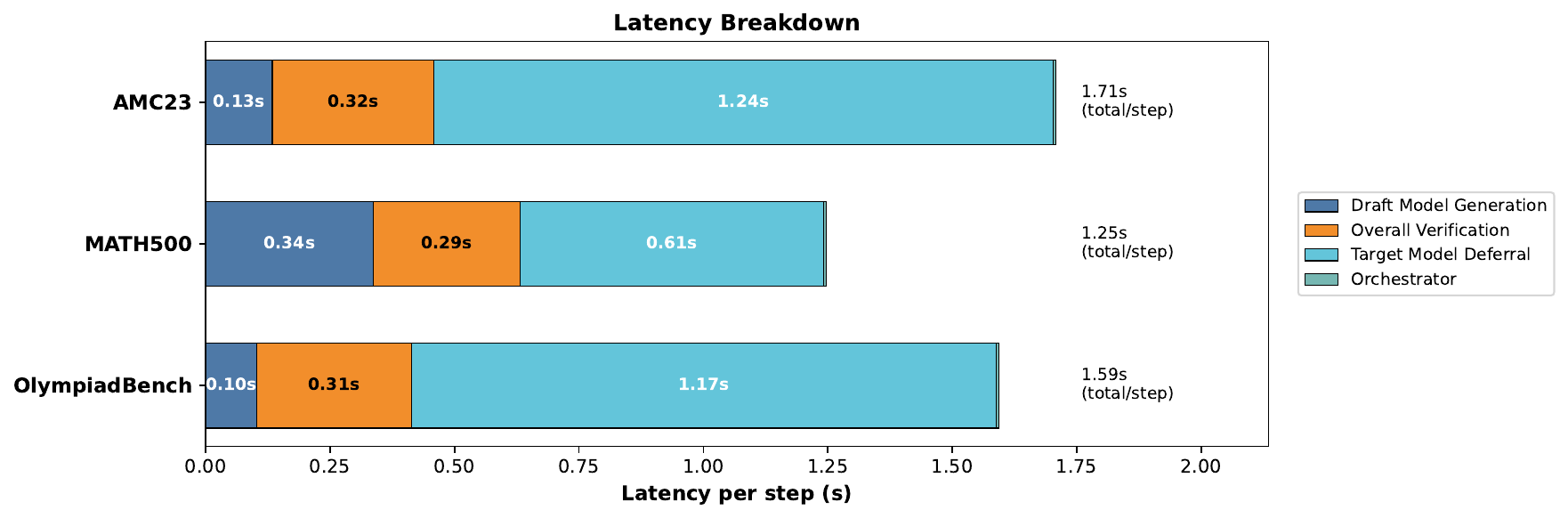}
    \caption{Latency breakdown of \specalign}
    \label{fig:latency_breakdown}
\end{figure}

\subsection{Hardware and frameworks used for the experiments}
All experiments are performed on a node equipped with 3 NVIDIA A100 Tensor Core GPUs. Each model is served on a separate GPU and accessed using vLLM~\citep{kwon2023efficient} API endpoint. For both beam search and \specalign, we vary the computational budget (e.g., by adjusting the beam width $n$) to trace out different latency-accuracy scalings.

\subsection{Engineering and Performance Optimizations:}
In order to make the \specalign algorithm more latency-efficient, we implemented some performance optimizations for increased parallelism and efficiency that are worth noting here. In particular, within the \subsample~subroutine in \Cref{alg:subsample}, the log-likelihoods of the generated beams under the target model as well as their scores under the reward model are evaluated. Both of these options can be carried out concurrently to only pay for the latency cost of the slower of these two operations. Moreover, during the generation of new candidate beams for a given step, ideally we would like to use prefix-caching so that we do not need to recompute the logits of the previously decoded steps. However, prefix-caching is disabled in vLLM when we also want to access the log probabilities of the prompt tokens, which is necessary for the \subsample~subroutine that requires the computation of log-likelihoods of trajectories under the target model. We solve this problem by modifying the vLLM source code to enable the computation of log probabilities. As a consequence, we do not need to compute the log-probabilities of the tokens that are prefix-cache hit, and only need the log-probabilities of the tokens appearing in the last step. We solve this issue by bypassing the prefix-cache hit tokens for the previous steps and efficiently computing the log-probabilities of the newly generated tokens in the last step, avoiding redundant computations.

\subsection{Hyperparameters for \specalign experiments}
The hyperparameters used in the experiments in Table \ref{tab:results_table} are presented in Table \ref{tab:hyperparam-selection}. Importantly, there are 3 key hyperparameters in this algorithm: the beam-width $n$, inverse-temperature for soft verification $\beta$, and the dynamic switching threshold $\tau$. For the MATH500 and OlympiadBench datasets, hyperparameters are chosen on a separate set of 100 questions from that dataset, treated as a validation set. For AMC23 dataset, since there were only 40 questions, we select the hyperparameters that give the best accuracy vs. latency for different choices of $n$ on the entire dataset, as is common practice for small datasets.


\begin{table}[h]
  \caption{Chosen hyperparameters $\beta$ and $\tau$ for different datasets; $\mathbf{n}$ is the beam width.}
  \label{tab:hyperparam-selection}
  \centering
  \begin{tabular}{l|l|c|cc}
    \toprule
    \textbf{Family} & \textbf{Dataset} & \(\mathbf{n}\) & \(\boldsymbol{\beta}\) & \(\boldsymbol{\tau}\) \\
    \midrule
    \multirow{9}{*}{Qwen Family}
      & \multirow{3}{*}{AMC23}         & 4  & 10000 & 0.8 \\
      &                                & 8  & 1000 & 0.8 \\
      &                                & 16 & 1000 & 0.9 \\
      \addlinespace
      & \multirow{3}{*}{MATH500}       & 4  & 1000 & 0.8 \\
      &                                & 8  & 10000 & 0.75 \\
      &                                & 16 & 1000 & 0.8 \\
      \addlinespace
      & \multirow{3}{*}{OlympiadBench} & 4  & 1000 & 0.9 \\
      &                                & 8  & 100  & 0.9 \\
      &                                & 16 & 1000 & 0.7 \\
    \addlinespace
    \midrule
    \multirow{9}{*}{Llama Family}
      & \multirow{3}{*}{AMC23}         & 4  & 1000 & 0.9 \\
      &                                & 8  & 1000 & 0.8 \\
      &                                & 16 & 1000 & 0.85 \\
      \addlinespace
      & \multirow{3}{*}{MATH500}       & 4  & 100 & 0.8 \\
      &                                & 8  & 1000 & 0.8 \\
      &                                & 16 & 100 & 0.7 \\
      \addlinespace
      & \multirow{3}{*}{OlympiadBench} & 4  & 10000 & 0.8 \\
      &                                & 8  & 100  & 0.6 \\
      &                                & 16 & 1000 & 0.7 \\
    \bottomrule
  \end{tabular}
\end{table}




\begin{figure}[h]
    \centering
    \begin{subfigure}[b]{0.42\textwidth}
    \centering
    \includegraphics[width=\linewidth]{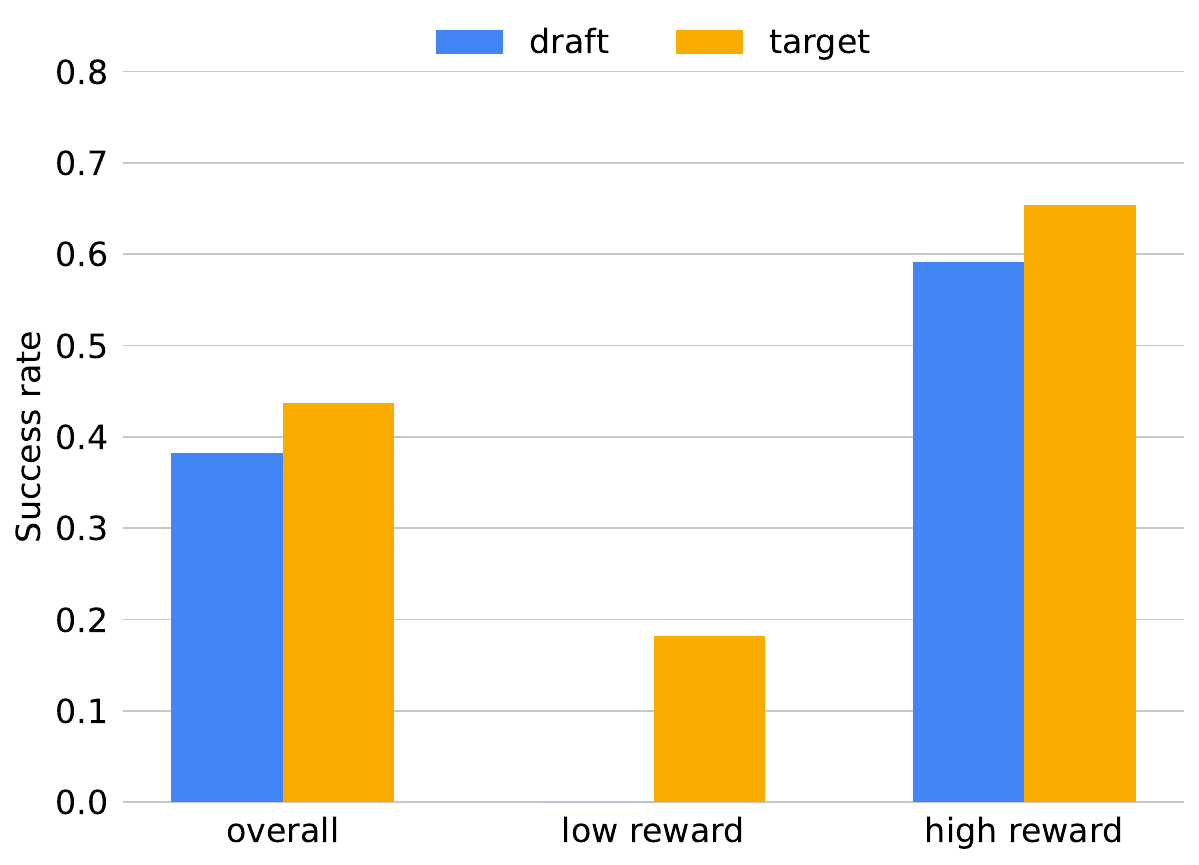}
    \caption{``High reward'' threshold is $0.6$}
    \label{fig:7a}
  \end{subfigure}%
  \qquad
    \begin{subfigure}[b]{0.42\textwidth}
    \centering
    \includegraphics[width=\linewidth]{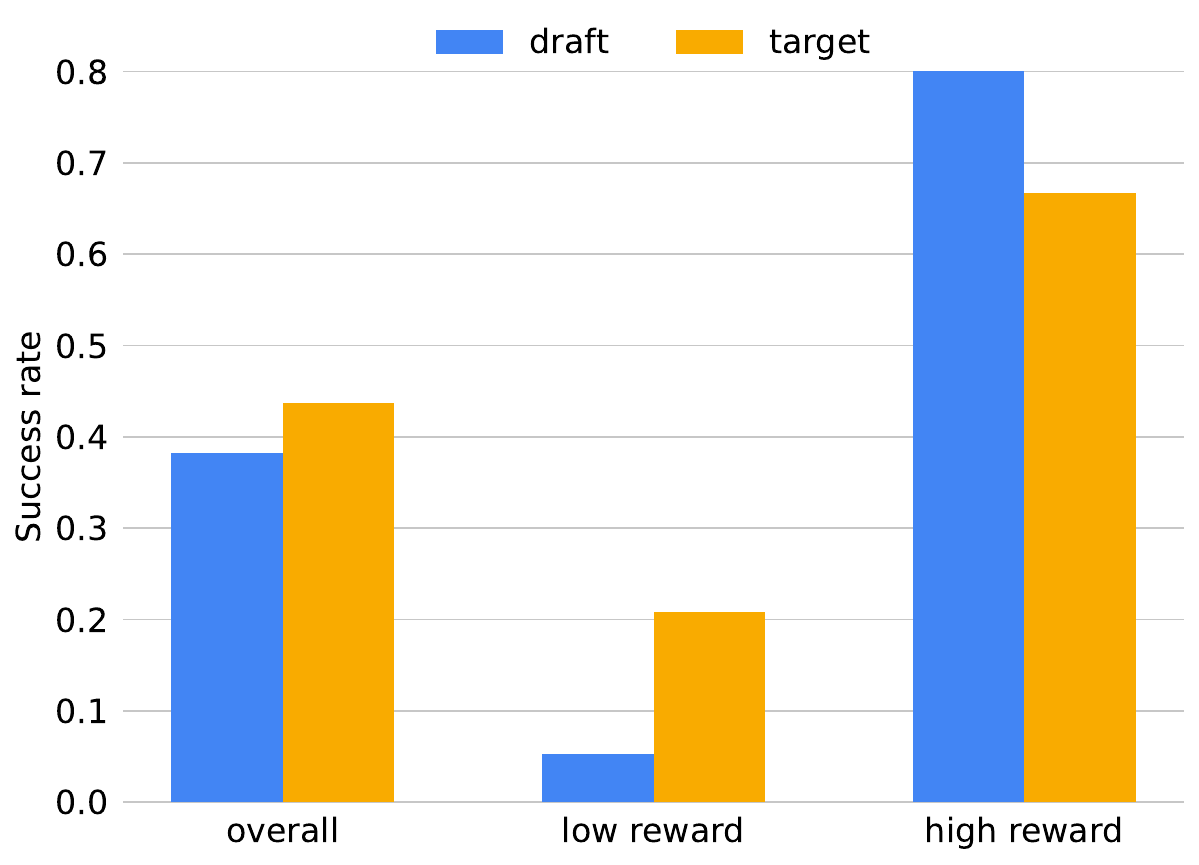}
    \caption{``High reward'' threshold is $0.7$}
    \label{fig:7b}
  \end{subfigure}
    \caption{The threshold on reward determining the batch of ``high-reward'' $8$-step partial traces generated by the target model is varied, within the same experimental setup of \Cref{fig:1b}. The trend persists very strongly: the draft model is much better at completing high-reward partial traces compared to low-reward partial traces generated by the target model.}
    \label{fig:7}
\end{figure}

\subsection{Dynamic switching}

In this section, we carry out some further analyses of how the accuracy varies as the reward threshold which defines ``high-reward'' and ``low-reward'' is varied in the setting of \Cref{fig:1b}, where it is set as $0.5$. As we increase the reward threshold defining high-reward, we observe that the trend grows stronger: the draft model gets increasingly better at solving high-reward traces. The results are described in \Cref{fig:7}.

We further try to understand the effect of switching from target model as generator to draft model as generator, as a function of the point at which traces are sliced. In \Cref{fig:1b} we observed that the draft model is very successful at completing $8$-step high-reward partial traces generated by the target model, while it is much worse at completing the corresponding low-reward partial traces. In this section, we run the same experiment but set the switchpoint to be earlier, by generating only $4$ initial steps of reasoning from the target model instead of $8$. The results of this experiment are plotted in \Cref{fig:step4_h0p5}. We observe that the draft model continues to achieve higher accuracy when completing high-reward traces compared to when completing low-reward traces. However, when we set the switchpoint to be earlier, we observe that completing with either draft or target model achieves similar performance on the low-reward traces, while there is a deterioration in performance on high-reward traces when the draft model is used to complete them.

Based on this experiment, we conclude that the trend observed in \Cref{fig:1b} is stronger when we switch from the target to the draft model at later points: completing high-reward traces using the draft model is better than using the target model, when the high-rewards are observed later in the trace.

\begin{figure}[h]
    \centering
    \includegraphics[width=0.5\linewidth]{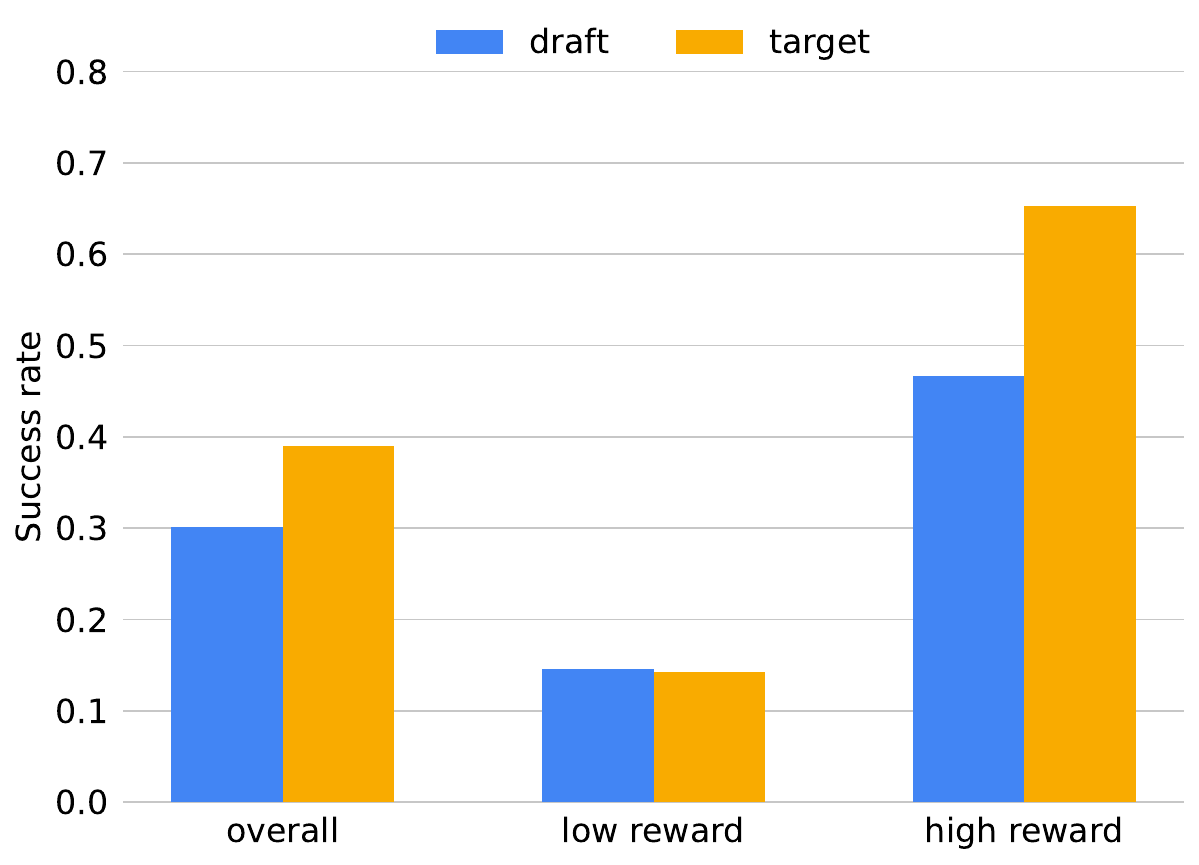}
    \caption{We generate $4$ steps of reasoning from the target model. The remaining steps are completed either by the target model, or by the draft model. We bucket the initial $4$-step reasoning traces generated by the target model into ``low-reward'' and ``high-reward'' based on the PRM reward of the trace, where the threshold is set as reward $= 0.5$. When contrasted with \Cref{fig:1b}, which is the same experiment, but with $8$ initial steps of reasoning generated by the target model, there are some differences.}
    \label{fig:step4_h0p5}
\end{figure}

{

}

\subsection{Robustness to Noisy Reward Signals}
A key concern for approaches based on reward-guided generation is the reliability of the Process Reward Model (PRM). To assess \specalign, in a controlled manner, howe \specalign tolerates imperfections in the PRM, we simulate a degraded reward signal by injecting Gaussian noise: $\tilde{r}_{\texttt{PRM}} = r_{\texttt{PRM}} + Z$, where $Z \sim \mathcal{N}(0, 0.01)$ is an independent zero-mean Gaussian with standard deviation $0.1$. As shown in Table \ref{tab:noisy_prm}, \specalign demonstrates remarkable robustness. Even with a noisy signal, the performance gap widens between \specalign and the baseline which is beam search with the small model. At the same time \specalign remains competitive with beam search with the large model. Crucially, we did not retune hyperparameters for this noisy setting, suggesting that \specalign does not require a perfect reward signal to function effectively.

{
\begin{table}[h]
\caption{Robustness of \specalign under a noisy PRM ($\tilde{r} = r + Z$ where $Z \sim \mathcal{N}(0,0.01)$). \specalign maintains high success rates compared to baselines, demonstrating that it does not rely on perfect reward signals.}
\centering
\resizebox{0.8\linewidth}{!}{
\begin{tabular}{l|c|cc|cc|cc}
\toprule
\textbf{Method} & $n$ & \multicolumn{2}{c}{\textbf{AMC23}} & \multicolumn{2}{c}{\textbf{MATH500}} & \multicolumn{2}{c}{\textbf{OlympiadBench}} \\
\cmidrule(lr){1-8}
& & \textbf{Acc} (\%) $\uparrow$ & \textbf{Lat} (s) $\downarrow$ & \textbf{Acc} (\%) $\uparrow$ & \textbf{Lat} (s) $\downarrow$ & \textbf{Acc} (\%) $\uparrow$ & \textbf{Lat} (s) $\downarrow$ \\
\cmidrule(lr){1-8}
BS(Small) w/ $\tilde{r}$ & 4 & 36.3 & 9.55  & 53.3 & 7.07  & 26.0 & 11.4 \\
BS(Large) w/ $\tilde{r}$ & 4 & 43.3 & 14.4 & 76.7 & 11.2 & 30.7 & 14.4 \\
\specalign w/ $\tilde{r}$ & 4 & \textbf{46.7} & 16.5 & \textbf{78.0} & 11.1 & \textbf{30.7} & 14.1 \\
\midrule
BS(Small) w/ $\tilde{r}$ & 8 & 34.4 & 12.1 & 62.0 & 9.77  & 25.0 & 15.8 \\
BS(Large) w/ $\tilde{r}$ & 8 & 50.8 & 20.6 & \textbf{78.7} & 14.2 & 34.0 & 20.3 \\
\specalign w/ $\tilde{r}$ & 8 & \textbf{50.8} & 19.6 & 71.3 & 12.5 & \textbf{35.3} & 20.0 \\
\midrule
BS(Small) w/ $\tilde{r}$ & 16 & 43.1 & 19.7 & 59.3 & 15.3 & 29.3 & 25.4 \\
BS(Large) w/ $\tilde{r}$ & 16 & 46.7 & 26.6 & \textbf{73.3} & 18.6 & 32.7 & 25.5 \\
\specalign w/ $\tilde{r}$ & 16 & \textbf{49.2} & 27.6 & 72.7 & 18.8 & \textbf{37.0} & 28.0 \\
\bottomrule
\end{tabular}
}
\label{tab:noisy_prm}
\end{table}
}

\end{document}